\newtheorem{theorem}{Theorem}
\newtheorem{corollary}[theorem]{Corollary}
\newtheorem{lemma}[theorem]{Lemma}
\newtheorem{proposition}[theorem]{Proposition}
\theoremstyle{definition}
\newtheorem{definition}[theorem]{Definition}
\newtheorem{example}[theorem]{Example}
\title{Fuzzy Datalog$^\exists$ over Arbitrary t-Norms}
\author{
  Matthias Lanzinger\inst{3,1}
  \and
  Stefano Sferrazza\inst{1,3}
  \and
  Przemysław A. Wałęga\inst{1}
  \and
  Georg Gottlob\inst{2,1}
 }
\institute{
  University of Oxford
  \and
  University of Calabria
  \and
  TU Wien
}
\authorrunning{Lanzinger, Sferrazza, Wałęga \& Gottlob}
\titlerunning{Fuzzy Datalog$^\exists$ over Arbitrary t-Norms}
\begin{document}
  
\maketitle             

\begin{abstract}
One of the main challenges in the area of Neuro-Symbolic AI is to perform logical reasoning in the presence of 
both neural and symbolic data. This requires combining heterogeneous data sources such as knowledge graphs, neural model predictions, structured databases, crowd-sourced data, and many more. To allow for such reasoning, we generalise the standard rule-based language Datalog with existential rules (commonly referred to as tuple-generating dependencies) to the fuzzy setting, 
by allowing for arbitrary $t$-norms in the place of classical conjunctions in rule bodies.
The resulting formalism allows us to perform reasoning about data associated with degrees of uncertainty while preserving computational complexity results and the applicability of reasoning techniques established for the standard Datalog setting. In particular, we provide fuzzy extensions of Datalog chases
which produce fuzzy universal models and we exploit them to show that in important fragments of the language, reasoning has the same complexity as in the classical setting.
\end{abstract}

\section{Introduction}\label{intro}

We currently see a rapid growth of Artificial Intelligence and its usage in large-scale applications, such as image and speech recognition, knowledge graphs completion, or recommendation generation.
Such systems produce  huge amounts of data, whose facts are associated with
degrees of certainty---expressing the level of confidence in the truth of
the datum. 
Reasoning about such data gives rise to new challenges for data management.
Specifically, there is a growing demand for
logical reasoning methods capable of integrating precise and uncertain data gathered from heterogeneous sources.
Developing efficient approaches for this task would allow us to make a significant step towards a tight integration of symbolic and sub-symbolic AI.

This research direction is currently intensively studied within the areas of Neural-Symbolic AI
\cite{garcez2019neural,garcez2023neurosymbolic} and 
Statistical-Relational AI
\cite{de2020statistical} which, in the last years, gave rise to numerous formalisms aiming to integrate various aspects of logical reasoning with neural models.
A~number of approaches are based on
combining logic programming languages with probabilistic and neural predicates; representative examples in this class are
DeepProbLog~\cite{DBLP:journals/ai/ManhaeveDKDR21},
SLASH~\cite{skryagin2022neural}, 
NeurASP~\cite{yang2023neurasp}, and
Generative Datalog~\cite{alviano2023generative}.
There are also approaches, like Logic Tensor Networks (LTN) \cite{luciano2022logic,serafini2016learning} or its extension LYRICS \cite{marra2019lyrics}, which propose to adapt logical semantics to the neural setting by interpreting terms with tensors and connectives
with $t$-norms.
On the other hand, there is  a long-standing research on fuzzy logics \cite{DBLP:books/kl/Hajek98}
and their  applications to
logic programming~\cite{achs1995fuzzy,DBLP:journals/fss/Ebrahim01,DBLP:journals/tnn/EklundK92,DBLP:journals/tfs/IranzoS18,lanzinger2022mvdatalog,DBLP:conf/epia/MedinaOV01} and description logics~\cite{lukasiewicz2008managing,straccia2001reasoning,bobillo2008fuzzydl}, among others.
More recently, the fuzzy setting is studied for complex multi-adjoint~\cite{DBLP:journals/fss/CornejoLM18,DBLP:conf/epia/MedinaOV01} and Prolog-derived semantics based on fuzzy similarity of constants and fuzzy unification procedures~\cite{DBLP:journals/tfs/IranzoS18}.

Although the recent progress on logical formalisms for reasoning about neural and uncertain data is undeniable, current approaches still do now allow to fully address the grand challenge of integrating neural data with logical reasoning
\cite{bardin2023machine,amel2019shallow}.
Indeed, current methods
are often of high computational complexity (reasoning is  often undecidable and sometimes complexity  is not even analysed), require completely new, often exotic, reasoning procedures tailored to the  introduced formalisms, or
impose significant restrictions on the allowed forms of uncertainty as well as on their interaction within logical reasoning.

We address these difficulties by introducing an extension \tdat of the standard rule-based language Datalog (with existential rules)
to the setting where data can be associated with degrees of certainty and rules are equipped with a wide range of connectives (interpreted by arbitrary $t$-norms) operating on  
these degrees.
To illustrate the reasoning capabilities of \tdat consider the example from \Cref{fig:ex}, where the task is to 
determine a common hypernym of objects presented in images $\mathit{img1}$ and $\mathit{img2}$.
To this end, we apply the CNN image classifier EfficientNet~\cite{DBLP:journals/ai/ManhaeveDKDR21}, which provides us with predicted labels for the subject of the image and truth degrees of these predictions.
This allows us to produce fuzzy facts with  \emph{neural predicates};
the highest certainty degrees $0.800$ and $0.900$ are associated with
$\mathit{NeuralLabel(img1,tiger\_shark)}$ and
$\mathit{NeuralLabel(img2,tench)}$, respectively.
We also use a lexical database WordNet\footnote{\url{http://wordnetweb.princeton.edu/}} \cite{DBLP:journals/cacm/Miller95} which contains, among many others, precise facts about hypernyms;
for example we obtain facts         $\mathit{Hypernym(tiger\_shark, fish)}$ and         $\mathit{Hypernym(tench, fish)}$.
To perform reasoning based on a combination of neural data  from EfficientNet and precise facts from WordNet we use a \tdat program consisting of the following rules, where conjunctions in rule bodies are replaced with operators corresponding to $t$-norms:
\begin{align}
    \mathit{NeuralLabel}(x,y) & \to \mathit{Class}(x,y), \tag{$r_1$}\label{ex1}
    \\ 
    \mathit{Class}(x,y) \conjL \mathit{Hypernym}(y,z)
    & \rightarrow \mathit{Class}(x,z),  \tag{$r_2$}\label{ex2}
    \\ 
    \mathit{Class}(x,z) \conj_{\product} \mathit{Class}(y,z)
    & \rightarrow \mathit{CommonClass}(x,y,z). \tag{$r_3$}\label{ex3}
\end{align}
Rule \eqref{ex1} introduces a binary predicate $\mathit{Class}$ which holds for image labels predicted by EfficientNet via $\mathit{NeuralLabel}$,
Rule \eqref{ex2} exploits knowledge about $\mathit{Hypernym}$ from WordNet to assign image objects to classes, and
Rule \eqref{ex3} derives common classes for pairs of images.
Note that \tdat allows for using various $t$-norms in different rules; in particular, we use Łukasiewicz $t$-norm $\conjL$ and the product $t$-norm $\conj_{\product}$, which operate on certainty degrees as follows: $a \conjL b = \max\{0, a+b-1\}$ and $a \conj_{\product} b = a \cdot b$.
Thus, Rule \eqref{ex2} allows us to derive $\mathit{Class(img1,fish)}$ with certainty $0.800 \conjL 1 = 0.800$ and  $\mathit{Class(img2,fish)}$ with certainty $0.900 \conjL 1 = 0.900$ (facts about $\mathit{Hypernym}$ are precise and so, they have a truth degree 1).
Then, Rule \eqref{ex3} derives $\mathit{CommonClass}(img1,img2,fish)$ with certainty $0.800 \conj_{\product} 0.900 = 0.720$. 
Note that we can derive also $\mathit{CommonClass(img1,img2,tiger\_shark)}$ with a low certainty $0.016$, since EfficientNet 
misclassified  image $img2$ as a $\mathit{tiger\_shark}$ with certainty $0.020$.

\pgfmathsetmacro{\rectw}{1.9}
\pgfmathsetmacro{\recth}{8}
\pgfmathsetmacro{\gap}{0.8}

\begin{figure*}[ht!]
\begin{tikzpicture}%

\setlength{\jot}{0pt} 
\usetikzlibrary{arrows.meta,arrows}
\tikzstyle{every node}=[font=\scriptsize]
\tikzset{>=latex}

\node[inner sep=0pt] (shark) at (-5,-2.5)
    {\includegraphics[width=.15\textwidth]{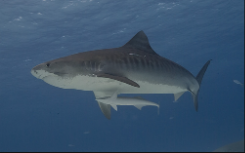}};

\node[inner sep=0pt] (tench) at (-5,-4.5)
    {\includegraphics[width=.15\textwidth]{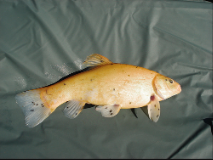}};

    \node[rectangle, draw, text width=7cm] at (0,2.6) (program) {
        \begin{minipage}{\textwidth}
            $\begin{aligned}
    \mathit{NeuralLabel}(x,y) & \to \mathit{Class}(x,y)
    \\
    \mathit{Class}(x,y) \conjL \mathit{Hypernym}(y,z)
    & \rightarrow \mathit{Class}(x,z)
    \\
    \mathit{Class}(x,z) \conj_{\product} \mathit{Class}(y,z)
    & \rightarrow \mathit{CommonClass}(x,y,z)
            \end{aligned}$
        \end{minipage}
    };
  
\node[rectangle, draw, text width=5.7cm] at (0,-3.4) (data1) {
        \begin{minipage}{\textwidth}
            $\begin{aligned}
        &0.800: \mathit{NeuralLabel(img1,tiger\_shark)}\\
        &0.070: \mathit{NeuralLabel(img1,great\_tiger\_shark)}\\
        &0.030: \mathit{NeuralLabel(img1,hammerhead)}\\
        &0.020: \mathit{NeuralLabel(img1,scuba\_diver)}\\ 
        &0.010: \mathit{NeuralLabel(img1,impala)}\\[2.5ex]      
        &0.900:  \mathit{NeuralLabel(img2,tench)}\\
        &0.020: \mathit{NeuralLabel(img2,tiger\_shark)}\\
        &0.010:  \mathit{NeuralLabel(img2,goldfish)}\\
        &0.010: \mathit{NeuralLabel(img2,coho)}\\ 
        & \qquad \qquad \qquad \qquad \qquad   \vdots
            \end{aligned}$
        \end{minipage}
    };

\node[rectangle, draw, text width=5cm] at (-4.6,0) (data2) {
        \begin{minipage}{\textwidth}
            $\begin{aligned}
        &\mathit{Hypernym(tiger\_shark, requiem\_shark)}\\
        &\mathit{Hypernym(tiger\_shark, shark)}\\
        &\mathit{Hypernym(tiger\_shark, fish)}\\        
        &\mathit{Hypernym(tench, cyprinid)}\\      
        &\mathit{Hypernym(tench, cyprinformfish)}\\     
        &\mathit{Hypernym(tench, fish)}\\             
        & \qquad \qquad \qquad \qquad   \vdots
            \end{aligned}$
        \end{minipage}
    };

\node[rectangle, draw, inner sep=8pt] at (0,0) (system) {\normalsize \textbf{REASONER}};

\node[rectangle, draw, text width=4.9cm] at (4.5,0) (out) {
        \begin{minipage}{\textwidth}
            $\begin{aligned}
        &0.900: \mathit{Class(img2,fish)}\\
        &0.800:  \mathit{Class(img1,fish)}\\
        &0.720:  \mathit{CommonClass(img1,img2,fish)}\\
         &0.016:  \mathit{CommonClass(img1,img2},\\
         & \qquad \qquad \qquad \qquad \qquad \mathit{tiger\_shark)}\\
        & \qquad \qquad \qquad  \qquad \vdots
            \end{aligned}$
        \end{minipage}
    };

\node[below=-0.05cm of data1] {\textbf{Image classification from EfficientNet}};
\node[above=-0.05cm of program] {\textbf{\tdat  program}};
\node[above=-0.05cm of out] {\textbf{Output}};
\node[above=-0.05cm of data2] {\textbf{Lexical knowledge from WordNet}};
\node[left=0.05cm of shark] {\textbf{\textit{img1}}};
\node[left=0.05cm of tench] {\textbf{\textit{img2}}};

\draw[-{Stealth[length=3mm, width=2mm]},  thick] (shark) -- (data1); 
\draw[-{Stealth[length=3mm, width=2mm]},  thick] (tench) -- (data1); 
\draw[-{Stealth[length=3mm, width=2mm]},  thick] (program) -- (system); 
\draw[-{Stealth[length=3mm, width=2mm]},  thick] (data1) -- (system); 
\draw[-{Stealth[length=3mm, width=2mm]},  thick] (data2) -- (system); 
\draw[-{Stealth[length=3mm, width=2mm]},  thick] (system) -- (out); 

\end{tikzpicture}
\caption{An application of \tdat to derive common classes of objects in input images $\mathit{img1}$ and $\mathit{img2}$;
reasoning is performed based on image classifications from EfficientNet (with uncertainty degrees) and lexical knowledge from WordNet (precise information).}\label{fig:ex}
\end{figure*} 
Our formalism \tdat is, therefore, a natural extension of \datex (Datalog extended with existential rules \cite{baget2011rules}, also known as tuple-generating dependencies \cite{beeri1981implication} and studied under the name of \datalogpm \cite{cali2009general,DBLP:conf/kr/GottlobLP14}) to the fuzzy setting, where conjunctions are replaced with $t$-norms.
It is worth emphasising that the class of \tdat programs is very broad; we allow for existential quantification in rule heads (which we did not use in the exemplary program to simplify presentation), any arity predicates, recursion, and arbitrary $t$-norms (in contrast to many other approaches, e.g., our previous research tailored specifically to the Łukasiewicz $t$-norm~\cite{lanzinger2022mvdatalog}).
In particular, no restriction on the choice of $t$-norms allows us to model a  range of interactions between degrees of certainty. 
This flexibility
is also important from the practical perspective, as the choice of $t$-norms has a significant impact on the performance of a system~\cite{farahbod2012comparison}.

The main advantage, 
distinguishing \tdat  from the related formalisms, 
is  that \tdat does not only allow us to perform complex logical reasoning about uncertain data gathered from heterogeneous data sources, but il also allows us to apply 
well-studied \datex reasoning mechanisms. 
In particular, 
we show in the paper how to adapt the (semi-oblivious and restricted) chase procedures developed for \datex and we prove that complexity of reasoning in various fragments of \tdat is the same as in the corresponding fragments of \datex.
Hence, we obtain a proper extension of \datex which allows us to perform complex reasoning about degrees of uncertainty with no negative impact on the computational complexity and with the possibility to use standard chase procedures.

The main contributions of this paper are as follows\footnote{Preliminary ideas for \tda (i.e., without existential quantification) were previously presented as an extended abstract~\cite{lanzinger2022new}.} :

\begin{itemize}[noitemsep]
    \item We introduce   \tdat (\Cref{sec:datex}) as a fuzzy extension of  \datex allowing for arbitrary $t$-norms instead of standard Boolean conjunction in rule bodies. Both syntax and semantics are defined by natural extensions of \datex to the fuzzy setting.
    As a result, we lay foundations for fuzzy extensions of the Datalog$^\pm$ family of ontology languages (obtained by imposing various restrictions on \datex) suitable for neuro-symbolic applications.

    \item We propose (\Cref{sec:chasetech}) a fuzzy version of the chase and we show  (\Cref{sec:universal}) that, similarly as in \datex, application of a finite fuzzy chase
    results in a fuzzy universal model, which can be used to decide entailment. 
        We observe, however, that reasoning with fuzzy chases introduces new challenges, which disallows us to directly translate results on termination and complexity from \datex.
    \item 
We introduce (\Cref{{sec:termination}}) a new type of \emph{truth-greedy} fuzzy chases for \tdat.
We show their relation to standard chases for \datex, which allows us to exploit termination and complexity results for \datex.
In particular, 
we show \ptime-completeness for entailment in  \tda and in weakly acyclic \tdat, matching the complexity of Datalog and weakly acyclic \datex.

    \item 
    We introduce (\Cref{sec:negation}) an extension of  \tdat with  fuzzy negations and any  other unary operators (e.g., threshold operators) interpreted as functions  $[0,1] \longrightarrow [0,1]$.  
    We show that adding such operators does not increase complexity of reasoning if the input program is stratifiable (and does not use existential quantification), namely it remains \ptime-complete in data complexity.
\end{itemize}

\section{\datex over $t$-norms}
\label{sec:datex}
In this section, we introduce syntax and semantics of  \tdat, as well as present the basic reasoning problem and notation which we will use in the paper.

\paragraph{Signature and domain.} 
We  fix  a signature $\sigma$ (i.e. a set of predicate symbols, each of a fixed, but arbitrary arity) and countable sets  $\dom$ and \nulls{} of \emph{object elements} and \emph{nulls}, resepectively, with  $\dom \cap \nulls = \emptyset$;
we let $\domn = \dom \cup \nulls$.
We will use  $\gatom$ and $\gatomn$ for the sets of all ground atoms with constants in  $\dom$ and in $\domn$, respectively.

\paragraph{Fuzzy dataset.} 
A \emph{fuzzy dataset}  is a
partial function
$\D: \gatom \longrightarrow (0,1]$
assigning  real numbers from the interval $(0,1]$---treated as \emph{truth degrees}---to a \emph{finite} number of
 atoms in $\gatom$. 
Note that a fuzzy dataset does not assign truth degrees to atoms with nulls and that
it never assigns 0 to any atom,
which is in line with the standard definition of a dataset listing facts which need to hold true (but not mentioning which facts need to hold false).

\paragraph{$t$-norms.} 
A \emph{$t$-norm} is any commutative, monotone, and associative function of the form $\conj \colon [0,1] \times [0,1] \longrightarrow [0,1]$, 
with 1 being the identity element~\cite{DBLP:books/kl/Hajek98}. 
It follows from the definition that if $a$ and $b$ are Boolean (i.e., $0$ or $1$), then
$a \conj b$ coincides with the value of the standard conjunction
$a \land b$, for any  $t$-norm $\conj$.
Thus, $t$-norms provide a  generalisation of the standard conjunction to the fuzzy setting. 
Commonly used $t$-norms in fuzzy logics include the \emph{minimum  $t$-norm} (also known as G\"odel $t$-norm) $a \conj_{\min} b = \min \{a,b\} $,  \emph{\luka $t$-norm} $a \conjL b = \max\{0, a+b-1\}$, and the \emph{real product $t$-norm} $a \conj_{\product} b = a \cdot b$. 
However, $t$-norms can  be significantly more complex, for example, all functions of the form $f_p(a,b) = (a^p+b^p-1)^{\frac{1}{p}}$, for $p<0$, are $t$-norms (part of the Schweizer-Sklar family of $t$-norms~\cite{DBLP:journals/chinaf/ZhangHX06}). 
The following known observation will be particularly important for our results.
\begin{proposition}
\label{minislargest}
For every $t$-norm \conj{} and all $a,b\in [0,1]$ it holds that  $a \conj b \leq a \conj_{\min} b$.
\end{proposition}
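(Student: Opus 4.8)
The plan is to derive the bound directly from two of the defining properties of a $t$-norm: the fact that $1$ is its identity element, and monotonicity in each argument. The key observation is that $\min\{a,b\}$ is exactly what one obtains from $\conj$ by replacing one of its two arguments with $1$ and then invoking the identity law; so monotonicity, applied to the inequalities $a \leq 1$ and $b \leq 1$, should force $a \conj b$ to lie below $\min\{a,b\}$. No associativity is needed.

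Concretely, I would proceed as follows. First, since $b \leq 1$, monotonicity of $\conj$ in its second argument gives $a \conj b \leq a \conj 1$, and the identity property yields $a \conj 1 = a$; hence $a \conj b \leq a$. Symmetrically, from $a \leq 1$ together with monotonicity in the first argument one obtains $a \conj b \leq 1 \conj b = b$. Combining the two inequalities gives $a \conj b \leq \min\{a,b\} = a \conj_{\min} b$, which is the claim.

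I expect no substantial obstacle here; the only point deserving a little care is the precise reading of the monotonicity axiom. Since it is stated as a single property, I would first remark that commutativity reduces monotonicity in one argument to monotonicity in the other. It therefore suffices to establish $a \conj b \leq a$ and then apply commutativity to $b \conj a$ to get $a \conj b = b \conj a \leq b$, avoiding a separate monotonicity argument for the first coordinate.
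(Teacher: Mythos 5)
Your proof is correct. The paper itself offers no proof of this proposition---it is stated as a known observation with a pointer to Klement et al.'s monograph on $t$-norms---and your argument is precisely the standard one found there: from $b \leq 1$, monotonicity, and the identity law, $a \conj b \leq a \conj 1 = a$; by commutativity, $a \conj b = b \conj a \leq b \conj 1 = b$; hence $a \conj b \leq \min\{a,b\}$. Your remark that commutativity lets one get by with monotonicity in a single argument, and that associativity plays no role, is also accurate; there is nothing to add.
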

\noindent For an  overview of $t$-norms and their properties, refer  to the work of Klement et al.~\cite{tnormsbook}.

\paragraph{\tdat programs.} 
A \tdat \emph{program} $\prog$ is a finite set of \emph{rules} $r$, of the form
\begin{align*}
  R_1(\mathbf{x_1}) \conj_r \cdots \conj_r R_\ell(\mathbf{x_\ell}) & \rightarrow \exists \mathbf{z}\, S(\mathbf{x_h}, \mathbf{z}) ,
\end{align*}
where $\conj_r$ is any $t$-norm\footnote{We slightly abuse notation by using the same symbol, $\conj_r$, for a connective in a rule and 
for a $t$-norm which interprets this connective.} (each rule can use a different $t$-norm),
$S, R_1, \dots, R_\ell$ are  predicate symbols in the signature and  $\mathbf{x_1}, \dots, \mathbf{x_\ell}, \mathbf{x_h}, \mathbf{z}$ are (possibly empty)  sequences of variables of arities matching the predicate symbols. 
The left-hand side of the implication $\rightarrow$ in a rule $r$ is the \emph{body}, $\body(r)$,  and the right-hand side is its \emph{head}, $\head(r)$. 
Variables in $\mathbf{x_1}, \dots, \mathbf{x_\ell}$ are called \emph{body variables},
those in $\mathbf{x_h}$  \emph{frontier variables},
$\fr(r)$, 
and those in $\mathbf{z}$ are  \emph{existential variables}.
We assume that in each rule
the set of body variables contains each frontier variable, but does not contain any existential variable.
A program with no existential variables is  called  a \tda program. 
We use  $\vars(\phi)$ to refer to the set of all variables in an expression $\phi$ (e.g., in a rule or in a rule body).
A grounding of a rule $r$
is any function $\rho: \vars(\body(r)) \longrightarrow \domn$ 
assigning domain constants and nulls to  body variables.

\paragraph{Semantics.}
A \emph{fuzzy interpretation} is a function $\inter \colon \gatomn \longrightarrow [0,1]$, assigning truth degrees to 
ground atoms (also to atoms with nulls).
The function is extended to complex expressions $\phi$ and $\phi'$ inductively as follows:
  \begin{align*}
  \inter(\phi \conj \phi') & = \conj(\inter(\phi), \inter(\phi')) ,
  \\
  \inter(\phi \rightarrow \phi')& = \min \left\{1, 1 - \inter(\phi) + \inter(\phi') \right\} ,
  \\
   \inter \big(\exists \mathbf{z} \; S ( \mathbf{a}, \mathbf{z}) \big) & = \sup \left\{ \inter \big( S(\mathbf{a}, \mathbf{b}) \big) \mid \mathbf{b} \in \domn^{|\mathbf{z}|} \right\} ,
  \end{align*}
where $\conj$ is any $t$-norm and $\mathbf{a}$ is a sequence of constants in $\domn$.
Note that we use the same  
Łukasiewicz semantics for implication $\to$
in all rules.

A rule $r$ is \emph{$K$-satisfied} by a fuzzy interpretation $\inter$,
for a rational number $K \in [0,1]$,
if
$\inter(\rho(r)) \geq K$
for every grounding $\rho$ of $r$. 
Thus,  $\phi \to \phi'$ is $K$-satisfied if
$ \inter(\rho(\phi')) - \inter(\rho(\phi)) \geq K -1$.
In particular, 
$1$-satisfiability  requires that 
$ \inter(\rho(\phi')) - \inter(\rho(\phi)) \geq 0$,
that is, the truth degree of the head is at least as large as the truth degree of the body.
On the other hand, 
$\phi \to \phi'$ is trivially $0$-satisfied by any $\inter$, since 
$ \inter(\rho(\phi')) - \inter(\rho(\phi))   \geq - 1$ is always true.
A fuzzy interpretation  $\inter$ is a \emph{$K$-fuzzy model} of a program $\prog$ if all rules in $\prog$ are $K$-satisfied by $\inter$.
For a fuzzy dataset $\D$, we let $\inter_\D$ be its  
minimal fuzzy interpretation defined such that $\inter_\D(A)=\D(A)$ if $A$ is in the domain of $\D$  and $\inter_\D(A)=0$ otherwise.
Then, $\inter$ is
a \emph{fuzzy
model}  of $\D$
if $\inter(A) \geq  \I_\D(A)$ for every ground atom $A$.

\paragraph{Reasoning and complexity.}

The basic reasoning problem we consider 
is $K$-entailment, for any $K \in [0,1]$, which is to check whether in every $K$-fuzzy model of a  program $\prog$ and a fuzzy dataset $\D$,
the truth degree of a goal ground atom $G$ is not smaller than a target value $c$.
Hence, the problem is defined as follows:

\begin{problem}{$K$-Entailment}
Input: & A \tdat program $\prog$, a fuzzy dataset $\D$, a ground atom $G \in \gatom$, and $c \in[0,1]$.\\
Output: & `Yes' if and only if $\inter(G) \geq c$, for all $K$-fuzzy models $\inter$ of $\prog$ and $\D$.
\end{problem}
\noindent 
If the output is `yes', we  say that 
$G$ is $(c,K)$-\emph{entailed} by $\prog$ and $\D$, written as $\PD \ent{c}{K} G$.

Clearly, checking $K$-entailment requires applying $t$-norms and, although
computing most of the standard $t$-norms is easy (e.g., all the $t$-norms mentioned in this paper),
one can introduce computationally-demanding $t$-norms. 
For this reason,
we will abstract away from the complexity of computing $t$-norms.
In particular, 
when studying  computational complexity of $K$-\textsc{Entailment},
we will treat
the time required to compute $t$-norms 
and the memory used to store truth degrees as constant.
Moreover, we will  focus on the \emph{data complexity} of $K$-\textsc{Entailment}---which is  with respect to the size of $\D$ only---and is particularly important for data-intensive applications, like the one from \Cref{fig:ex}.
Furthermore, for the sake of simplicity, we only explicitly analyse atomic entailment in this paper. Recall that entailment of a conjunctive query $\phi$ can be expressed by adding a rule $\phi \rightarrow \mathit{Goal}$ and checking  entailment of $\mathit{Goal}$.
Therefore, complexity results in our paper transfer also to conjunctive query entailment.

We can  observe that $K$-\textsc{Entailment} extends the standard notion of entailment in \datex. 
In particular,  
if $c=K=1$ and the only truth degree assigned by a fuzzy dataset is 1, then $K$-\textsc{Entailment} coincides with the standard entailment in \datex, as shown below.

\begin{theorem}
\label{thm:all1}
Let $\prog$ be a \tdat program, $\D$ a fuzzy dataset, and  $G \in \gatom$.
If the only truth degree assigned by $\D$ is 1 (i.e., $\D(A) = 1$ whenever  $\D(A)$ is defined), then the following are equivalent:
\begin{enumerate}
\item $\PD \ent{1}{1} G$,

\item 
$G$ is entailed by the \datex counterparts
$\prog'$ and $\D'$ of $\prog$ and $\D$ (namely 
$\prog'$ is obtained by replacing $t$-norms in $\prog$  with conjunctions and $\D' = \{A \mid \D(A) = 1 \}$).  
\end{enumerate}
\end{theorem}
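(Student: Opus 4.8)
The plan is to prove the two implications separately, using a simple embedding of classical structures into fuzzy interpretations as the bridge. For a (possibly infinite) set $M \subseteq \gatomn$ of atoms, let $\inter_M$ be the fuzzy interpretation with $\inter_M(A)=1$ if $A\in M$ and $\inter_M(A)=0$ otherwise. The key observation is that, since every $t$-norm agrees with Boolean conjunction on $\{0,1\}$, under any grounding $\rho$ the value $\inter_M(\rho(\body(r)))$ equals $1$ if all body atoms lie in $M$ and $0$ otherwise. Hence, if $M$ is a classical model of $\prog'$ and $\D'$, then $\inter_M$ is a $1$-fuzzy model of $\prog$ and $\D$: whenever a grounded body evaluates to $1$ all its atoms are in $M$, so $M$ classically provides a witness $\mathbf{b}$ with $S(\ldots,\mathbf{b})\in M$, giving head value $\sup_{\mathbf b}\inter_M(S(\ldots,\mathbf b))=1$; and whenever the body evaluates to $0$ the rule is trivially $1$-satisfied. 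Moreover $\inter_M\ge\inter_\D$, because every atom of $\D'$ (which equals the support of $\D$, as all degrees are $1$) lies in $M$.

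With this embedding, the direction (1)$\Rightarrow$(2) is immediate and holds for arbitrary $t$-norms. Assume $\PD \ent{1}{1} G$ and let $M$ be any classical model of $\prog'$ and $\D'$. Then $\inter_M$ is a $1$-fuzzy model, so $\inter_M(G)\ge 1$, i.e.\ $\inter_M(G)=1$, which by definition of $\inter_M$ means $G\in M$. As $M$ was arbitrary, $G$ is classically entailed.

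For the converse (2)$\Rightarrow$(1) the naive idea --- take a $1$-fuzzy model $\inter$ and show its $1$-cut $\{A : \inter(A)=1\}$ is a classical model --- does not work in the presence of existential rules, and this is exactly the step I expect to be the main obstacle. The reason is the supremum semantics of $\exists$: a rule with existential head is $1$-satisfied as soon as $\sup_{\mathbf b}\inter(S(\ldots,\mathbf b))$ reaches the body value, and this supremum may equal $1$ without being attained by any single witness, so the $1$-cut need not satisfy the rule. Instead I would argue directly that $\inter(G)=1$ by inducting over a classical chase of $\prog'$ and $\D'$ (a universal model in which the null-free goal $G$ appears iff it is entailed). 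The invariant I would maintain is that for every finite set $F$ of chase atoms one can assign the nulls occurring in $F$ to elements of $\domn$ so that all atoms of $F$ simultaneously receive $\inter$-value arbitrarily close to $1$; applying this to an $F$ containing $G$, and using that $G$ is null-free, yields $\inter(G)=1$. The existential-free fragment \tda is the clean special case: there the head carries no supremum, the $1$-cut of any $1$-fuzzy model is a classical model, and the equivalence follows for arbitrary $t$-norms.

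The crux of the inductive step, and the place where the ``arbitrary $t$-norm'' generality is delicate, is propagating near-$1$ values through a rule body: if the grounded body atoms have $\inter$-value above $1-\varepsilon$, one needs the body's $t$-norm to output a value approaching $1$ as $\varepsilon\to 0$. This holds whenever $\conj_r$ is left-continuous (all standard $t$-norms, including minimum, \luka, and product, qualify), since left-continuity together with $1$ being the identity forces $\conj_r(a,b)\to 1$ as $a,b\to 1^-$. For a discontinuous $t$-norm such as the drastic one (where $a\conj b=0$ whenever $a,b<1$) this propagation fails, and one can then build a $1$-fuzzy model in which an existential witness only approaches truth degree $1$, so that a classically entailed null-free atom receives degree $0$; hence some restriction to (left-)continuous $t$-norms, or to the existential-free fragment, appears necessary for (2)$\Rightarrow$(1). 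I would therefore carry out the chase induction under left-continuity and treat \tda separately to obtain the unconditional statement there.
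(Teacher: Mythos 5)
Your direction (1)$\Rightarrow$(2) coincides with the paper's: both embed an arbitrary classical model $S$ of $\prog'$ and $\D'$ as the crisp fuzzy interpretation sending atoms of $S$ to $1$ and everything else to $0$, and use the fact that every $t$-norm restricted to $\{0,1\}$ is Boolean conjunction (for crisp interpretations the supremum in the $\exists$-semantics is attained, so this direction is unproblematic). The divergence is in the converse, and there your suspicion is exactly right: the paper proves (2)$\Rightarrow$(1) by the very crisp-cut argument you rejected. Given a $1$-fuzzy model $\inter$, it defines $\inter^c(A)=1$ if $\inter(A)=1$ and $\inter^c(A)=0$ otherwise, and argues by contradiction that $\inter^c$ is again a $1$-fuzzy model; the justification hinges on the step ``$\inter(\phi')=1$ would imply that $\inter^c(\phi')=1$'' for a rule head $\phi'$. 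When $\phi'$ is existentially quantified this is precisely the attainment-of-supremum problem you isolated: $\inter(\rho(\phi'))=\sup_{\mathbf{b}}\inter\bigl(S(\mathbf{a},\mathbf{b})\bigr)$ can equal $1$ while every individual witness stays strictly below $1$, in which case $\inter^c(\rho(\phi'))=0$ and the paper's contradiction does not arise. So the paper's own proof has a gap at exactly the point you flagged, and it is sound only for \tda, where heads carry no supremum.

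Moreover, your drastic-$t$-norm example refutes the theorem as stated, not merely its proof. Let $\D$ assign $P(a)\mapsto 1$ and let $\prog$ consist of $P(x)\to\exists z\,R(x,z)$ and $R(x,y)\conj_{\mathrm{D}}R(x,u)\to G(x)$, with $\conj_{\mathrm{D}}$ the drastic $t$-norm. Classically $G(a)$ is entailed, but the interpretation with $\inter(P(a))=1$, $\inter(R(a,b_i))=1-2^{-i}$ for an enumeration $b_1,b_2,\dots$ of $\domn$, and value $0$ on all other atoms is a $1$-fuzzy model of $\prog$ and $\D$: the existential rule is $1$-satisfied because the supremum equals $1$, while every grounded body of the second rule evaluates to $0$ since both arguments of $\conj_{\mathrm{D}}$ are below $1$. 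Yet $\inter(G(a))=0$, so $G(a)$ is not $(1,1)$-entailed. Hence (2)$\Rightarrow$(1) genuinely fails for arbitrary $t$-norms in the presence of existential heads, and your repaired version is the correct one: the induction over the classical chase, propagating truth degrees arbitrarily close to $1$, needs only that $\sup_{a,b<1}(a\conj b)=1$ (guaranteed by left-continuity), and for \tda the $1$-cut argument works unconditionally for all $t$-norms. This weakening does not endanger the paper's downstream uses of the theorem, since the \ptime-hardness transfers only require existential-free (hence \tda) instances, where your unconditional equivalence applies.
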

\begin{proof}
Assume that $\PD \ent{1}{1} G$ and let  $S \subseteq \gatomn$ be a \datex model of $\prog'$ and $\D'$.
We will show that $G \in S$.
We define a fuzzy interpretation $\inter$ 
such that for  any $A \in \gatomn$ we have  $\inter(A)=1$   if $A \in S$, and $\inter(A)=0$ if $A \notin S$.
Hence, $\inter$ is a fuzzy model of $\D$.
We claim that $\inter$ is also a 1-fuzzy model of $\prog$.
For this, it suffices to show that each rule of $\prog'$ is satisfied in $S$ if and only if its counterpart in $\prog$ is 1-satisfied in $\inter$.
This  holds since
the semantics of any $t$-norm coincides with the semantics of a conjunction and the semantics of $\to$ in \tdat coincides with its semantics in \datex,
whenever a fuzzy interpretation assigns only Boolean truth degrees to all atoms.
As $\PD \ent{1}{1} G$ and $\inter$ is a 1-fuzzy model of $\prog$ and $\D$, we obtain that $\inter(G)=1$, and so, $G \in S$, as required.

For the opposite direction assume that 
$\prog'$ and $\D'$ entail $G$, and let
$\inter$ be a 1-fuzzy model of $\prog$ and $\D$.
We construct a `crisp' version $\inter^c$ of $\inter$ by setting $\inter^c(A)=1$ if $\inter(A)$, and $\inter^c(A)=0$ if $\inter(A)<1$, for any $A \in \gatomn$.
Since $\D$ assigns only 1 as a truth degree, $\inter^c$ is a fuzzy model of $\D$.
We claim that $\inter^c$ is also a 1-fuzzy model of $\prog$.
Towards a contradiction suppose that  some
ground rule $\phi \to \phi'$ of $\prog$
is 
1-satisfied by $\inter$, but not by $\inter^c$; that is
$\inter(\phi) \leq \inter(\phi')$ and
$\inter^c(\phi) > \inter^c(\phi')$.
Since
over the Boolean values
any $t$-norm behaves like a conjunction, we obtain that
$\inter^c(\phi) = 1$ and $\inter^c(\phi')=0$.
By the   monotonicity of $t$-norms
$\inter(\phi) \geq \inter^c(\phi)$, so
$\inter(\phi)=1$.
Moreover
$\inter(\phi') <1$ because $\inter(\phi') =1$ would imply that $\inter^c(\phi')=1$. 
Hence  $\inter(\phi) \not\leq \inter(\phi')$, rising a contradiction. Thus
$\inter^c$ is a 1-fuzzy model of $\prog$ and $\D$.
Since $\inter^c$ is a `crisp' model of $\prog$ and $\D$, the fact that $\prog'$ and $\D'$ entail $G$ implies that $\inter^c(G)=1$.
Thus, $\inter(G)=1$, as required.
\end{proof}

\section{Chasing \tdat}
\label{sec:chase}

In this section, we define fuzzy chases and fuzzy universal models.
We will show that  each finite fuzzy chase produces a fuzzy universal model, which can be used to check entailment. 

As described in the previous section, the notion of satisfaction in \tdat is parameterised with  $K \in [0,1]$ (and so are also parameterised notions of a model and entailment),
which determines how much larger the truth degree of a head of rule needs to be than the truth degree of the body so that we treat  the rule as satisfied.
It turns out, however, that the specific choice of the value for $K$ 
does not impact  our technical results.
Therefore, for the sake of simplification, we will assume in the rest of the paper that $K=1$ and we will not mention $K$; for example instead of $K$-\textsc{Entailment} and $\ent{c}{K}$ we will simply refer to \textsc{Entailment} and $\ent{c}{}$, respectively.
In \Cref{discussion}
we will briefly discuss how our results generalise to arbitrary values of $K$.

\subsection{Fuzzy Triggers and Chases}
\label{sec:chasetech}
\label{sec:challenge}
We start by defining notions of 
fuzzy semi-oblivious and restricted chases in \tdat,  
by lifting 
the  notions used in standard \datex  \cite{DBLP:journals/mst/CalauttiP21}.
Our definitions are based on a fuzzy counterpart of a trigger, defined next.

A \emph{fuzzy trigger} 
is a pair $(r, \rho)$, where $r$ is a rule  
and  $\rho$ is a grounding of $r$.
The \emph{result of applying a trigger} $(r, \rho)$ 
to a fuzzy interpretation $\inter$
is a fuzzy interpretation $\inter'$ obtained 
by updating the truth degree of the head of $\rho(r)$  (with existential variables  replaced by nulls)  to the value $\inter(\rho(\body(r)))$, that is, to the truth degree of the body. 
Formally, we let the \emph{head} $\Htrig(r,\rho)$ \emph{of a fuzzy trigger} 
$(r, \rho)$ 
be the ground atom obtained from the head of 
$\rho(r)$ by deleting existential quantifier and replacing each existential  variable $x$ with a null $\mathbf{N}_{r, \rho_{|\fr(r)}}^x \in \nulls$, where the null is
determined by $x$, $r$, and the restriction of $\rho$  to frontier variables $\fr(r)$.
To simplify notation and make it more intuitive, in what follows we will  
write 
$\degr{\inter}{r}{\rho}$ 
instead of $\inter(\rho(\body(r)))$, for the truth degree  corresponding to a trigger application.
Hence, we formally define the result
of applying $(r,\rho)$ to $\inter$ as the following fuzzy  interpretation $\inter'$:
\[
\inter'(\phi) = \begin{cases}
    \degr{\inter}{r}{\rho}, & \text{if } \phi=\Htrig(r,\rho),  \\
    \inter(\phi), & \text{if } \phi \in \gatomn \setminus \{ \Htrig(r,\rho) \}.
\end{cases}
\]
\begin{example}\label{example:trigger}
Let $r$ be the following rule $r$
\begin{align*}
\mathit{NeuralLabel}(x,y) \conjL \mathit{NeuralLabel}(u,w) 
\rightarrow \exists z \mathit{CommonClass}(x,y,z) %
\end{align*}
and let
$\rho$ be its grounding with $x \mapsto \mathit{img1}$, $y \mapsto \mathit{tiger\_shark}$, $u \mapsto \mathit{img2}$, and $w \mapsto \mathit{tench}$, so  the head $\Htrig(r,\rho)$ of the trigger $(r,\rho)$ is  $\mathit{CommonClass(img1,img2, \mathbf{N}_{r, \rho_{|\fr(r)}}^z )} $.
To illustrate application of $(r,\rho)$ assume that $\I$ assigns  $\mathit{NeuralLabel(img1,tiger\_shark)} \mapsto 0.8$ 
and $\mathit{NeuralLabel(img2,tench)} \mapsto 0.9$.
Hence $\degr{\inter}{r}{\rho} = 0.8 \conjL 0.9 = 0.7$, and so, the result of applying $(r,\rho)$ to $\I$ is a fuzzy interpretation $\I'$ with  $\mathit{CommonClass(img1,img2, \mathbf{N}_{r, \rho_{|\fr(r)}}^z )} \mapsto 0.7$.
\end{example}

Depending on the type of allowed triggers we will  obtain different types of chase procedures.
In particular, the semi-oblivious chase will allow for 
\semiob-active triggers and the restrictive chase for \restr-active triggers only.
We say that:
\begin{itemize}
\item a trigger $(r,\rho)$ is \emph{\semiob-active} in a fuzzy interpretation $\inter$ 
if $\degr{\inter}{r}{\rho} > \inter(\Htrig(r,\rho))$, that is, an application of the trigger will increase the current truth degree of $\Htrig(r,\rho)$,

\item 
a trigger $(r,\rho)$ is \emph{\restr-active} in a fuzzy interpretation $\inter$ if
$\degr{\inter}{r}{\rho} > \inter(\Htrig')$
for all 
$\Htrig'$ obtained by replacing nulls in
$\Htrig(r,\rho)$ with elements from $\domn$.
\end{itemize}

\noindent Note that each 
\restr-active trigger is also 
 \semiob-active, because 
$\degr{\inter}{r}{\rho} > \inter(\Htrig')$ for all $\Htrig'$ described above, implies,  in particular, that
 $\degr{\inter}{r}{\rho} > \inter(\Htrig(r,\rho))$.

A \emph{semi-oblivious chase} (\semiob-chase) for a program $\prog$ and a fuzzy dataset $\D$ is a (finite or infinite) sequence of triggers $(r_1,\rho_1),(r_2,\rho_2),\dots$\ with rules $r_i$ in $\prog$ such that 
there exists a sequence of fuzzy interpretations $\inter_0,\inter_1, \inter_2, \dots $ with the following properties:
\begin{enumerate}[label=(\roman*),noitemsep,topsep=3pt]
    \item $\inter_0=\inter_\D$,
\item each $(r_i,\rho_i)$ is $\semiob$-active in $\inter_{i-1}$, 
\item each $\inter_{i}$ (with $i>0$) is the result of applying $(r_i,\rho_i)$ to $\inter_{i-1}$,
    \item for each $\inter_i$ and every \semiob-active trigger in $\inter_i$, 
    there exists finite $j > i$ such that
    this trigger is not \semiob-active in $\inter_j$ (the \emph{fairness} condition).
    \label{cond.fairness}
\end{enumerate}
Observe that each fuzzy chase has a unique sequence $\inter_0,\inter_1, \inter_2, \dots $ of fuzzy interpretations, which we will call \emph{corresponding} %
 to the chase;
we will also call $\inter_{i}$ the  \emph{$i$th interpretation in the chase}.
A \emph{restricted chase}  (\restr-chase) is defined analogously, by using the notion of a
 \restr-active trigger instead of an \semiob-active trigger.
Note that both types of chases can be either finite or infinite.
If a chase is finite, we call the last interpretation $\I_n$ the \emph{result of applying the chase}.

\begin{example}\label{example:chase}
Let $\prog = \{ r\}$ for the rule $r$ from \Cref{example:trigger}  and
let $\D$ be the fuzzy dataset corresponding to $\I$ from \Cref{example:trigger} (i.e., $\I_\D = \I$).
There are four active triggers in $\I_\D$ (as there are four ways we can assign $\mathit{img1}$ and $\mathit{img2}$ to $x$ and $y$).
Each order of their applications gives rise to a different chase. For example we can obtain a chase, say $(r,\rho_1), (r,\rho_2), (r,\rho_3), (r,\rho_4)$, with corresponding interpretations $\I_0,  \I_1, \I_2, \I_3, \I_4$ such that
\begin{align*}
& \I_1 \text{ assigns } \mathit{CommonClass(img1,img2, \mathbf{N}_{r, \rho_1{_{|\fr(r)}}}^z )} \mapsto 0.7 \qquad (\text{since } 0.8 \conjL 0.9 =0.7),
\\
& \I_2 \text{ assigns } \mathit{CommonClass(img2,img1, \mathbf{N}_{r, \rho_2{_{|\fr(r)}}}^z )} \mapsto 0.7 \qquad (\text{since } 0.9 \conjL 0.8 =0.7),
\\
& \I_3 \text{ assigns } \mathit{CommonClass(img1,img1, \mathbf{N}_{r, \rho_3{_{|\fr(r)}}}^z )} \mapsto 0.6 \qquad (\text{since } 0.8 \conjL 0.8 =0.6),
\\
& \I_4 \text{ assigns } \mathit{CommonClass(img2,img2, \mathbf{N}_{r, \rho_4{_{|\fr(r)}}}^z )}\mapsto 0.8 \qquad (\text{since } 0.9 \conjL 0.9 =0.8).
\end{align*}
Note that $(r,\rho_1), (r,\rho_2), (r,\rho_3), (r,\rho_4)$ is both an \semiob-chase and an \restr-chase for $\prog$ and $\D$.
It is still an \semiob-chase if we add $\mathit{CommonClass(img1,img2,fish)} \mapsto 0.8$ to $\D$, but it is not an \restr-chase any more because
$\degr{\inter_0}{r}{\rho_1} = 0.7 \not> 0.8 =
\inter_0(\mathit{CommonClass(img1,img2,fish)})$, and so,
$(r,\rho_1)$ is not \restr-active in $\I_0$.
\end{example}

\subsection{Fuzzy Universal Models}\label{sec:universal}

The core property of  chases  in \datex, which makes them a crucial tool for checking entailment, is that each finite chase results in a \emph{universal model} that represents (modulo homomorphisms) all models of a given program and dataset. 
In this section, we show that an analogous result can be provided for fuzzy chases in \tdat.
We start by introducing a notion of a homomorphism which is tailored  to our fuzzy setting.
As defined below, a homomorphism is a function 
$h : \domn \longrightarrow \domn$, but we will often use its extension to atoms and rule bodies;
for example we will write $h(A)$ to refer to the atom $A$ with constants replaced according to $h$. 

\begin{definition}
\label{def:ndfh}
    A \emph{non-decreasing homomorphism} from  a fuzzy interpretation  $\inter$ to  a fuzzy interpretation $\inter'$ is any function $h : \domn \longrightarrow \domn$ such that:
\begin{enumerate}[label=(\roman*),noitemsep,topsep=3pt]
    \item $h(a) = a$, for every $a\in \dom$, and
    \label{univmodel.i}
    \item  $\inter(A) \leq \inter'(h(A))$, for every $A \in \gatomn$.
\end{enumerate}
We will write $\inter \ndfh \inter'$ if there is a non-decreasing  homomorphism from $\inter$ to $\inter'$.
\end{definition}

Similarly as in \datex, we define a  universal model of $\prog$ and $\D$ as  a model which can be homomorphically mapped to any model of $\prog$ and $\D$.
However, instead of the standard homomorphism, we use the above-defined non-decreasing homomorphism.

\begin{definition}
A \emph{fuzzy universal model} of a \tdat program $\prog$ and a fuzzy dataset $\D$ is any fuzzy model $\I$ of $\prog$ and $\D$ such that $\inter \ndfh \inter'$ for every fuzzy model  $\inter'$ of $\prog$ and $\D$.
\end{definition}

In general, a fuzzy
universal model may not be unique.
However, if a program does not have existential variables,  we can show that there exists a unique fuzzy universal model.
Note that this is analogous to the standard, non-fuzzy setting.

\begin{theorem}
\label{minimalmodel}
Each pair of a \tda program $\prog$ 
and a  fuzzy dataset $\D$ has a unique
fuzzy universal model.
\end{theorem}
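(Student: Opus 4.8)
The plan is to exhibit the fuzzy universal model explicitly as the pointwise least fuzzy model of $\prog$ and $\D$, and then to derive universality and uniqueness from minimality together with \Cref{minislargest}. First I would order fuzzy interpretations pointwise, writing $\inter_1 \le \inter_2$ when $\inter_1(A) \le \inter_2(A)$ for all $A \in \gatomn$, and define a candidate $\mathcal{U}$ by $\mathcal{U}(A) = \inf\{\,\inter(A) \mid \inter \text{ is a fuzzy model of } \prog \text{ and } \D\,\}$ for each $A \in \gatomn$. The set on the right is nonempty, since the interpretation assigning $1$ to every atom is trivially a fuzzy model, so each infimum is taken over a nonempty subset of $[0,1]$ and $\mathcal{U}$ is a well-defined fuzzy interpretation.

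The heart of the argument is to show that $\mathcal{U}$ is itself a fuzzy model, i.e. that the pointwise infimum of models is again a model. Satisfaction of $\D$ is immediate: every model $\inter$ has $\inter(A) \ge \inter_\D(A)$, hence $\mathcal{U}(A) \ge \inter_\D(A)$. For a rule $r$ with body $B_1 \conj_r \cdots \conj_r B_\ell$ and a grounding $\rho$, I would fix the quantity $q = \conj_r(\mathcal{U}(\rho(B_1)), \dots, \mathcal{U}(\rho(B_\ell)))$ and exploit monotonicity of the $t$-norm: since $\mathcal{U} \le \inter$ for every model $\inter$, we get $q \le \conj_r(\inter(\rho(B_1)), \dots, \inter(\rho(B_\ell))) \le \inter(\head(\rho(r)))$; as this holds for all models $\inter$, the fixed lower bound $q$ lies below the infimum of the right-hand sides, so $q \le \mathcal{U}(\head(\rho(r)))$, which is exactly $1$-satisfaction of $r$ by $\mathcal{U}$. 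This step, in which only monotonicity of $\conj_r$ is used, is the main technical obstacle; everything else is bookkeeping around it.

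Next I would verify that, because $\prog$ is existential-free, $\mathcal{U}$ vanishes on every atom containing a null. Consider the interpretation $\inter^{0}$ that agrees with $\mathcal{U}$ on $\gatom$ and assigns $0$ to every atom of $\gatomn \setminus \gatom$. It is a fuzzy model: any grounding sending a body variable to a null makes some body atom equal to $0$, whence by \Cref{minislargest} the body evaluates to $a \conj_r 0 = 0$ and the head constraint is vacuous, while every grounding ranging entirely over $\dom$ produces a head atom that is again null-free (each head variable is a frontier, hence a body, variable) and is satisfied because $\mathcal{U}$ satisfies $r$. By minimality $\mathcal{U} \le \inter^{0}$, so $\mathcal{U}$ is null-free (indeed $\mathcal{U} = \inter^{0}$). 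Universality is then witnessed by the identity map $\mathrm{id}:\domn \to \domn$, which fixes $\dom$ and satisfies $\mathcal{U}(A) \le \inter'(A) = \inter'(\mathrm{id}(A))$ for every model $\inter'$ by minimality; hence $\mathcal{U} \ndfh \inter'$ and $\mathcal{U}$ is a fuzzy universal model.

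Finally, for uniqueness I would take an arbitrary fuzzy universal model $\mathcal{V}$ and compare it with $\mathcal{U}$. Minimality gives $\mathcal{U} \le \mathcal{V}$ pointwise, while universality of $\mathcal{V}$ applied to the model $\mathcal{U}$ yields a non-decreasing homomorphism $g$ with $g(a)=a$ on $\dom$ and $\mathcal{V}(A) \le \mathcal{U}(g(A))$ for all $A$. On atoms $A \in \gatom$ the map $g$ acts as the identity, so $\mathcal{V}(A) \le \mathcal{U}(A)$, and combined with $\mathcal{U} \le \mathcal{V}$ this forces $\mathcal{V}$ and $\mathcal{U}$ to agree on all of $\gatom$. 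The point I expect to require the most care in the write-up is phrasing uniqueness correctly on null atoms: exactly as in classical Datalog, universal models are pinned down only up to non-decreasing-homomorphic equivalence (the mutual relations $\mathcal{V} \ndfh \mathcal{U} \ndfh \mathcal{V}$ supply this), and the existential-free setting lets us single out the null-free, pointwise least $\mathcal{U}$ as the distinguished representative with which every universal model coincides on $\gatom$.
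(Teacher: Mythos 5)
Your construction coincides with the paper's own proof: both take the candidate to be the pointwise infimum, over all fuzzy models of $\prog$ and $\D$, of each atom's truth degree; both prove it is itself a model by pushing the infimum through rule satisfaction via monotonicity of $t$-norms (legitimate here precisely because heads of \tda rules are existential-free ground atoms once grounded); and both witness universality by the identity map on $\domn$. Where you genuinely diverge is the uniqueness step, and your version is the more careful one. The paper concludes by asserting that, by construction, no model $\inter'$ other than the infimum model $\mathcal{U}$ satisfies $\inter' \ndfh \mathcal{U}$, and infers literal uniqueness; that assertion is too strong. Take $\prog = \{P(x) \to P(x)\}$ and $\D$ with $\D(P(c)) = 1$: the interpretation $\inter'$ agreeing with $\mathcal{U}$ everywhere except that $\inter'(P(N)) = 0.5$ for one null $N$ is a fuzzy model, and the map $h$ sending $N \mapsto c$ (identity elsewhere) witnesses $\inter' \ndfh \inter''$ for every model $\inter''$ --- in particular $\inter' \ndfh \mathcal{U}$ --- so $\inter'$ is a second fuzzy universal model. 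What survives is exactly what you prove: every universal model agrees with $\mathcal{U}$ on $\gatom$ (apply universality of $\mathcal{V}$ to the model $\mathcal{U}$ and use that non-decreasing homomorphisms fix $\dom$), all universal models are mutually $\ndfh$-equivalent, and $\mathcal{U}$ is the unique null-free one (your $\inter^{0}$ argument, which has no counterpart in the paper). So your write-up establishes the statement in the strongest form in which it actually holds, whereas the paper's final sentence, read literally, overclaims; if the theorem is to keep the word ``unique'', it should be read modulo the equivalence you describe, or restricted to null-free models.
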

\begin{proof}
Consider the set $S$
of all fuzzy models of $\prog$ and $\D$.
We claim that the fuzzy interpretation $\I$ with $\I(A) = \inf \{ \I'(A) \mid \I' \in S \} $, for each $A \in \gatomn$, is the unique fuzzy universal model of $\prog$ and $\D$.
By the construction, $\I$ is a fuzzy model of $\D$.
To show that it is a fuzzy model of $\prog$, we observe that for any rule $\phi \to \phi'$ in $\prog$, any of its  groundings $\rho$, and any $\I' \in S$, we have
 $\I'(\rho(\phi)) \leq \I'(\rho(\phi'))$.
Thus, we have also $\I(\rho(\phi)) \leq \I(\rho(\phi'))$.
Consequently, $\I$ is a fuzzy model of $\prog$ and $\D$ (i.e, $\I \in S$).
To prove that $\I$ is a fuzzy universal model, we need to show that $\I \ndfh \I'$, for any $\I' \in S$.
This, however, is witnessed by simply letting $h$ be the identity function on $\domn$.
Indeed, such $h$ satisfies both Conditions (i) and (ii) from \Cref{def:ndfh}.
Finally, we observe that, by the construction, no  $\I' \in S$ satisfies 
$\I' \ndfh \I$, so no $\I' \in S$
different from $\I$ can be a fuzzy universal model of $\prog$ and $\D$.
\end{proof}

We observe that each fuzzy universal model can be used to check entailment, which follows directly from our definitions.

\begin{proposition}
\label{univ.entailment}
    Let $\inter$ be a fuzzy universal model of a \tdat program $\prog$ and a fuzzy dataset $\D$, let $c \in [0,1]$, and let $G \in \gatom$. 
    Then  $\inter(G) \geq c$ if and only if $(\Pi,\D) \ent{c}{} G$.
\end{proposition}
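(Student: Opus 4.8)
The plan is to prove the two directions of the biconditional separately; each is essentially an unfolding of the definitions, and the only substantive point is how a non-decreasing homomorphism interacts with a goal atom that contains no nulls.

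For the direction from right to left, suppose $(\Pi,\D) \ent{c}{} G$. By the definition of entailment, every fuzzy model of $\prog$ and $\D$ assigns to $G$ a truth degree of at least $c$. Since a fuzzy universal model $\inter$ is, in particular, itself a fuzzy model of $\prog$ and $\D$, we may instantiate this with $\inter$ to obtain $\inter(G) \geq c$ immediately.

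For the converse direction, suppose $\inter(G) \geq c$ and let $\inter'$ be an arbitrary fuzzy model of $\prog$ and $\D$; the goal is to show $\inter'(G) \geq c$. Because $\inter$ is a fuzzy universal model, we have $\inter \ndfh \inter'$, so there is a non-decreasing homomorphism $h \colon \domn \longrightarrow \domn$ satisfying Conditions (i) and (ii) of \Cref{def:ndfh}. The key observation is that $G \in \gatom$, so $G$ mentions only constants from $\dom$; by Condition (i) the map $h$ fixes every such constant, whence $h(G) = G$. Applying Condition (ii) to $A = G$ then yields $\inter(G) \leq \inter'(h(G)) = \inter'(G)$, and therefore $\inter'(G) \geq \inter(G) \geq c$. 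As $\inter'$ was arbitrary, $(\Pi,\D) \ent{c}{} G$ follows.

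I expect no genuine difficulty here, and indeed the result is claimed to follow directly from the definitions. The one place that demands care is precisely the step $h(G) = G$: it relies on $G$ being null-free (i.e.\ $G \in \gatom$ rather than $G \in \gatomn$). Were nulls permitted in $G$, the homomorphism would only give $\inter(G) \leq \inter'(h(G))$ for a possibly different atom $h(G)$, and the bound on $\inter(G)$ would not transfer to $\inter'(G)$ itself; thus this hypothesis is exactly what makes the argument go through.
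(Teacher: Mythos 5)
Your proposal is correct and matches what the paper has in mind: the paper gives no explicit proof, stating only that the proposition ``follows directly from our definitions,'' and your argument is exactly that direct unfolding --- the universal model is itself a model (giving one direction), and the non-decreasing homomorphism to an arbitrary model, which fixes the null-free atom $G$ by Condition (i) of \Cref{def:ndfh}, gives the other. Your remark that the hypothesis $G \in \gatom$ (rather than $\gatomn$) is precisely what makes $h(G) = G$ hold is the right observation about where the definitions are actually being used.
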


The crucial property connecting fuzzy chases and universal models, is that every finite fuzzy chase (semi-oblivious or restricted)
needs to result in a fuzzy universal model.

\begin{theorem}
\label{universalmodel}
Let $\star\in\{\semiob,\restr\}$.
The result of a (finite) $\star$-chase for a \tdat program $\prog$ and a fuzzy dataset $\D$ is a fuzzy universal model of $\prog$ and $\D$.
\end{theorem}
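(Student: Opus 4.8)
The statement asserts two things about the final interpretation $\I_n$ of a finite $\star$-chase: that $\I_n$ is itself a fuzzy model of $\prog$ and $\D$, and that $\I_n \ndfh \I'$ for every fuzzy model $\I'$ of $\prog$ and $\D$. I would treat these two properties separately.

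For the first, I would begin by observing that every trigger application only raises truth degrees: each $\I_i$ dominates $\I_{i-1}$ pointwise, since an active trigger sets $\I_i(\Htrig(r_i,\rho_i)) = \degr{\I_{i-1}}{r_i}{\rho_i} > \I_{i-1}(\Htrig(r_i,\rho_i))$ and leaves all other atoms untouched. Hence $\I_n \geq \I_0 = \I_\D$, so $\I_n$ is a fuzzy model of $\D$. For modelhood of $\prog$ the crux is the fairness condition combined with finiteness: because the chase ends at $\I_n$, there is no later interpretation, so fairness forces $\I_n$ to admit no active trigger. I would then convert ``no active trigger'' into $1$-satisfaction. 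For a \semiob-chase, $\degr{\I_n}{r}{\rho} \leq \I_n(\Htrig(r,\rho))$ for every trigger; since $\Htrig(r,\rho)$ is one of the instances over which the supremum defining $\I_n(\rho(\head(r)))$ is taken, this gives $\I_n(\rho(\body(r))) \leq \I_n(\rho(\head(r)))$. For a \restr-chase the same conclusion follows, since non-activity directly supplies a witness atom $\Htrig'$ with $\I_n(\Htrig') \geq \degr{\I_n}{r}{\rho}$, again bounding the head supremum from below.

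For the second property I would fix an arbitrary model $\I'$ and build the homomorphism by induction along the chase, maintaining non-decreasing homomorphisms $h_i$ from $\I_i$ to $\I'$ that agree on previously introduced nulls. The base case $h_0 = \mathrm{id}$ works because $\I'$ is a fuzzy model of $\D$. In the inductive step for $(r_i,\rho_i)$, I would push the body through $h_{i-1}$: non-decreasingness on the body atoms together with monotonicity of the $t$-norm gives $\degr{\I_{i-1}}{r_i}{\rho_i} \leq \I'((h_{i-1}\circ\rho_i)(\body(r_i)))$, and since $\I'$ satisfies $r_i$ on the grounding $h_{i-1}\circ\rho_i$ this is in turn at most the supremum defining $\I'$ on the head. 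I would then extend $h_i$ by mapping the freshly introduced null(s) of $\Htrig(r_i,\rho_i)$ to a tuple witnessing this supremum, keeping $h_i = h_{i-1}$ elsewhere; as the fresh nulls occur only in the updated head atom, all other atoms are preserved.

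The main obstacle is precisely this witnessing step. The model property yields only that the supremum $\sup_{\mathbf b} \I'(S(h_{i-1}(\rho_i(\mathbf{x_h})), \mathbf b))$ is \emph{at least} the required degree, and a supremum need not be attained: one can exhibit a model $\I'$ whose values approach but never reach the body degree, so that no single witness $\mathbf b$ exists and no non-decreasing homomorphism can map the corresponding chase null. Resolving this --- for instance by restricting to witnessed models in which all relevant suprema are attained, or by arguing that the suprema attached to degrees actually produced along the chase are attained --- is exactly what separates the fuzzy setting from classical \datex, where existential satisfaction hands one a witness for free. A secondary bookkeeping point is that the same null-headed atom can be re-set to a strictly larger degree by a later trigger (two triggers sharing a frontier restriction), so each null's image must be chosen to witness its \emph{final} degree in $\I_n$; since degrees only increase along the chase, re-selecting witnesses for larger degrees stays compatible with the inequalities already established, but this monotone dependency has to be made explicit.
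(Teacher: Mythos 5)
Your route is essentially the paper's own: fix an arbitrary fuzzy model $\I'$ and induct along the chase, maintaining a non-decreasing homomorphism from $\I_i$ to $\I'$, with the identity as base case and, at each step, extending the map by sending the fresh nulls of $\Htrig(r_i,\rho_i)$ to a tuple witnessing satisfaction of $r_i$ in $\I'$ (the paper handles $\restr$ exactly as you do implicitly, by noting that every finite $\restr$-chase is a prefix of an $\semiob$-chase). Your first part --- modelhood of $\I_n$ via finiteness plus fairness --- is fine, and is in fact more explicit than the paper, whose proof establishes only the homomorphism property.

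The gap is the one you name yourself: you never produce the witness $a_x\in\domn$ needed to define the extended homomorphism on the fresh nulls, so the proposal is incomplete as a proof. You should know, though, that at exactly this point the paper simply asserts that, since $\I'$ satisfies $r_i$, there exists a ground atom $H$ with $\I'(h(\mathit{body}))\leq \I'(H)$ --- i.e., it silently assumes the supremum in the semantics of $\exists$ is attained. Your worry is well founded: take $\prog=\{P(x)\to\exists z\, S(z)\}$ and $\D(P(a))=0.5$; the chase result assigns $0.5$ to $S(\mathbf{N})$ for a null $\mathbf{N}$, yet the interpretation with $\I'(P(a))=0.5$, with $\I'(S(b_i))=0.5\cdot i/(i+1)$ for countably many constants $b_i\in\dom$, and with value $0$ elsewhere is a $1$-fuzzy model of $\prog$ and $\D$ admitting no non-decreasing homomorphism from the chase result. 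So the missing witness cannot be recovered by a cleverer argument under the stated semantics; one of the amendments you sketch (restricting to witnessed models, or building attainment into the satisfaction of $\exists$) is genuinely needed. Finally, your secondary bookkeeping claim --- that re-selecting a null's image when its trigger re-fires ``stays compatible'' because degrees only increase --- is itself unjustified: atoms derived in the meantime may mention that null, and their witnesses were chosen relative to its old image, so redefining it can break Condition~(ii) of \Cref{def:ndfh} for those atoms. (The paper's proof overlooks this case as well, since its claim that all atoms other than $\Htrig(r_i,\rho_i)$ are covered by the inductive hypothesis fails when such atoms contain the redefined nulls.)
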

\begin{proof}%
Consider first  $\star=\semiob$.
Let 
$(r_1,\rho_1),(r_2,\rho_2),\dots , (r_n,\rho_n)$ be a finite $\semiob$-chase for $\prog$ and $\D$,
and  let  $\I_\D=\I_0,\I_1, \dots \I_n$  be corresponding  fuzzy interpretations.
To show that $\I_n$ is a fuzzy universal model for $\prog$ and $\D$,
we fix an arbitrary fuzzy model $\I'$ of $\prog$ and $\D$;
we will show inductively on $i$ that $\I_i \ndfh \inter'$.

For the basis of induction, we need to show that there is a non-decreasing fuzzy homomorphism from $\I_\D$ to $\I'$.
Since $\I'$ is a fuzzy model of $\D$, we have  $\I_\D(A) \leq \I'(A)$ for every $A \in \gatomn$.  Thus the identity function on \domn is a
non-decreasing fuzzy homomorphism  witnessing  $\I_\D \ndfh \I'$.

For the inductive step assume that $h$ is a 
non-decreasing fuzzy homomorphism from $\I_{i-1}$ to $\I'$; we will show how to construct a  non-decreasing fuzzy homomorphism $h'$ from $\I_{i}$ to $\I'$.
Let us write $\mathit{body}$ as a shorthand for $\rho_i(\body(r_i))$.
Since $h$ is a  
non-decreasing fuzzy homomorphism from $\I_{i-1}$ to $\I'$ and $t$-norms are monotone, $\I_{i-1}(\mathit{body}) \leq \I'( h(\mathit{body}) )$.
As $\I'$ is a fuzzy model of $\prog$, it needs to satisfy $r_i$.
Thus, there exists a ground atom $H$, obtained by replacing each existential variable $x$ in $\head(r_i)$ with some $a_x \in \domn$,
such that $\I'(h(\mathit{body})) \leq \I'(H)$.
We use this $H$ to define $h'$ as follows:
\[
h'(a) = \begin{cases}
   a_x, & \text{if } a = \mathbf{N}_{r_i, \rho_{i|\fr(r_i)}}^x \text{ for some } x \text{ (i.e., $a$ is a null in $\Htrig(r_i, \rho_i)$)},  \\
    h(a), & \text{for all other  $a\in \domn$}.
\end{cases}
\]
It remains to show that $h'$ is  a non-decreasing fuzzy homomorphism from $\I_{i}$ to $\I'$. 
Condition~(i) of \Cref{def:ndfh} holds by the construction of $h'$. 
For Condition~(ii) it suffices to 
show that $\I_{i}(\Htrig(r_i,\rho_i)) \leq \I'(h'(\Htrig(r_i,\rho_i)))$, as for all  $A \in \gatomn$ other than $\Htrig(r_i,\rho_i)$, the inequality holds by the  inductive assumption.
We observe that the following hold:
    \[
    \I_{i}(\Htrig(r_i,\rho_i)) = \I_{i-1}( \mathit{body} ) \leq \I'(h(\mathit{body})) \leq \I'(H) = \I'(h'(\Htrig(r_i,\rho_i))).
    \]
The first equality holds by the definition of a trigger application, 
the next two inequalities are  already shown, 
and the last equality holds by the construction of $h'$.
Thus $h'$  witnesses $\I_i \ndfh \inter'$.

Finally, we observe that each finite \restr-chase is a prefix of some \semiob-chase, so the  argumentation above also proves the theorem for $\star = \restr$.
\end{proof}

\Cref{universalmodel}, together with \Cref{univ.entailment},  provides us with a mechanism for checking entailment, which aligns directly with the standard methods for deciding entailment in \datex. 
There are however two main difficulties that need to be addressed.
First, as in the classical setting, fuzzy chases are not always finite.
Second, unlike in the classical setting, a fuzzy chase can update \emph{multiple times} the truth degree of the same 
ground atom, as
illustrated in the following example.

\begin{example}
Consider a program $\prog$  with Rules \eqref{ex1}--\eqref{ex3}, 
and a  fuzzy dataset $\D$ assigning
    $\mathit{NeuralLabel}(img,c_1) \mapsto 0.9,  \mathit{Class}(img,c_1) \mapsto 0.6$, and $\mathit{Hypernym}(c_1,c_2) \mapsto 1$.
Let the first trigger in a  fuzzy chase for $\prog$ and $\D$  be $(r_2, \{x \mapsto img, y \mapsto c_1, z \mapsto c_2\})$, which results in $\mathit{Class}(img,c_2) \mapsto 0.6$. 
Let the second trigger be
$(r_1, \{x\mapsto img, y \mapsto c_1\})$, which assigns $\mathit{Class}(img,c_1) \mapsto 0.9$. 
This, however, makes $(r_2, \{x\mapsto img, y \mapsto c_1, z \mapsto c_2\})$
active again and its application now yields $\mathit{Class}(img,c_2) \mapsto 0.9$.
Note that it holds in both  \semiob- and \restr-chases.
\end{example}

The example above shows that even in the case of a \tda  program (with no existential quantification), a trigger can be \emph{reactivated}, and so, the same trigger can occur multiple times in a fuzzy chase.
This, in turn, may potentially lead to exponentially long or even infinite chases, which never happens in standard Datalog.
Clearly, if existential quantification is present in a \tdat program,  reasoning with chases becomes even more challenging.   
In the next section, we will show how to overcome these difficulties.
In particular, we will show in which cases the finiteness of a chase is guaranteed, and so, chasing can be used as a decision procedure for entailment checking.

\section{Truth-Greedy Chases}
\label{sec:termination}

As we have shown, reasoning is conceptually harder than in the classical setting, as a fuzzy chase may modify multiple times a truth degree of the same atom.
To address this difficulty, we will introduce truth-greedy chases, which allow us to overcome the above-mentioned issue.
This, in particular, will allow us to show that reasoning in weakly acyclic programs in \tdat is tractable for data complexity, and so, no harder than in \datex.

Let $\star \in \{ \semiob, \restr \}$.
We say that a $\star$-chase $(r_1,\rho_1),(r_2,\rho_2),\dots$  with corresponding fuzzy interpretations $\I_0, \I_1, \I_2, \dots$ is 
\emph{truth-greedy} 
if
each of its triggers $(r_i,\rho_i)$ has a maximal truth degree among all 
$\star$-active triggers in $\I_{i-1}$,
that is,
there is no $\star$-active trigger  $(r_i,\rho_i)$ such that 
$\degr{\I}{r}{\rho} < \degr{\I}{r'}{\rho'}$.
In other words, in the $i$th step of a truth-greedy chase, we need to apply a trigger whose results assigns a maximal truth degree among active triggers.

As we show next,  truth degrees of triggers in a truth-greedy chase are non-increasing.
Note that this does not follow directly from the definition of a truth-greedy trigger, because
application of a trigger can activate triggers which were not active before.
In general, such triggers may have  higher truth degrees.
However, we will show that due to monotonicity of $t$-norms none of the newly activated triggers can have a truth degree higher than previously applied triggers.

\begin{proposition}
\label{lem:monotone.triggers}
Let $\star \in \{\semiob,\restr\}$ and let $(r_1,\rho_1),(r_2,\rho_2),\dots$ be a truth-greedy $\star$-chase
with corresponding  interpretations  
$\I_0,\I_1, \I_2, \dots$.  
For all $i \geq 1$ it holds that  $\degr{\I_{i-1}}{r_i}{\rho_i} \geq \degr{\I_{i}}{r_{i+1}}{\rho_{i+1}}$.
\end{proposition}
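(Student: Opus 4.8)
The plan is to prove the inequality step by step, leaning on two facts: that applying a single trigger only raises the interpretation pointwise, and that the minimum is the largest $t$-norm (\Cref{minislargest}). Write $H^\ast = \Htrig(r_i,\rho_i)$ for the atom whose degree is updated at the $i$th step. By the definition of a trigger application, $\I_i$ coincides with $\I_{i-1}$ everywhere except at $H^\ast$, where $\I_i(H^\ast) = \degr{\I_{i-1}}{r_i}{\rho_i}$. Since $(r_i,\rho_i)$ is $\star$-active, hence $\semiob$-active, in $\I_{i-1}$, this value strictly exceeds $\I_{i-1}(H^\ast)$; consequently $\I_{i-1}(A) \leq \I_i(A)$ for every $A \in \gatomn$, so the interpretation grows monotonically across the step.

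First I would split on whether $H^\ast$ occurs among the ground body atoms of $(r_{i+1},\rho_{i+1})$. If it does, then since the body degree $\degr{\I_i}{r_{i+1}}{\rho_{i+1}}$ is obtained by applying the rule's $t$-norm to the degrees of the body atoms, \Cref{minislargest} bounds it above by $\conj_{\min}$ of those degrees, and hence by the degree of any single conjunct. Taking $H^\ast$ as that conjunct gives $\degr{\I_i}{r_{i+1}}{\rho_{i+1}} \leq \I_i(H^\ast) = \degr{\I_{i-1}}{r_i}{\rho_i}$, which is exactly the claim in this case.

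In the remaining case $H^\ast$ is not a body atom of $(r_{i+1},\rho_{i+1})$, so $\I_i$ and $\I_{i-1}$ agree on every body atom and therefore $\degr{\I_i}{r_{i+1}}{\rho_{i+1}} = \degr{\I_{i-1}}{r_{i+1}}{\rho_{i+1}}$. Here I would argue that $(r_{i+1},\rho_{i+1})$ was already $\star$-active in $\I_{i-1}$: its body degree is unchanged, while by the pointwise monotonicity $\I_{i-1} \leq \I_i$ every head degree entering the $\semiob$- or $\restr$-activity condition (namely $\I(\Htrig(r_{i+1},\rho_{i+1}))$, respectively $\I(\Htrig')$ for all null-replacements $\Htrig'$) is no larger in $\I_{i-1}$ than in $\I_i$, so the strict inequality defining activity is preserved. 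Being $\star$-active in $\I_{i-1}$, the trigger's degree is at most the maximal active degree in $\I_{i-1}$, which truth-greediness at step $i$ identifies with $\degr{\I_{i-1}}{r_i}{\rho_i}$; hence $\degr{\I_{i-1}}{r_{i+1}}{\rho_{i+1}} \leq \degr{\I_{i-1}}{r_i}{\rho_i}$ and the claim follows.

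The delicate point, exactly the phenomenon flagged by the discussion preceding the proposition, is the possibility that applying $(r_i,\rho_i)$ \emph{reactivates} or newly activates triggers whose body degree could a priori be larger. The second case shows this cannot happen: a trigger whose body is untouched by the update must already have been active in $\I_{i-1}$, so truth-greediness had already capped its degree. The first case is where \Cref{minislargest} does the real work, guaranteeing that the single atom whose degree increased, $H^\ast$, cannot lift a body degree above $\degr{\I_{i-1}}{r_i}{\rho_i}$ when it appears in that body. I expect the case split and the activity-preservation argument to be the main obstacle, since they are where monotonicity of $t$-norms and the maximality built into truth-greediness must be combined carefully.
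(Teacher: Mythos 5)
Your proof is correct, and it rests on the same two pillars as the paper's own argument: \Cref{minislargest} (the minimum bounds every $t$-norm, so a body degree never exceeds the degree of any single conjunct) and the maximality built into truth-greediness. The difference is in the organization and, in one place, in rigor. The paper argues by contradiction: assuming $\degr{\I_{i-1}}{r_i}{\rho_i} < \degr{\I_{i}}{r_{i+1}}{\rho_{i+1}}$, it invokes truth-greediness to obtain $\degr{\I_{i-1}}{r_{i+1}}{\rho_{i+1}} \leq \degr{\I_{i-1}}{r_{i}}{\rho_{i}}$, deduces that the body degree of $(r_{i+1},\rho_{i+1})$ strictly increased between $\I_{i-1}$ and $\I_i$, hence that this body mentions $\Htrig(r_i,\rho_i)$, and then applies \Cref{minislargest} to contradict the assumption. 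Your version is the direct (contrapositive) reading of the same argument, with the case split on whether $\Htrig(r_i,\rho_i)$ occurs in the body made explicit. Notably, your Case 2 supplies a step the paper leaves implicit: truth-greediness, as defined, only compares $(r_i,\rho_i)$ against triggers that are $\star$-active in $\I_{i-1}$, so before using it on $(r_{i+1},\rho_{i+1})$ one must show that this trigger was already active in $\I_{i-1}$. You do exactly this (unchanged body degree plus pointwise monotonicity of the interpretations pushes the strict activity inequality back from $\I_i$ to $\I_{i-1}$), whereas the paper applies the truth-greediness bound to $(r_{i+1},\rho_{i+1})$ without checking its activity. So your write-up is, if anything, slightly more careful than the paper's; the two arguments are otherwise the same.
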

\begin{proof}
Suppose towards a contradiction that 
$\degr{\I_{i-1}}{r_i}{\rho_i} < \degr{\I_{i}}{r_{i+1}}{\rho_{i+1}}$, for some $i \geq 1$.
Since the chase is truth-greedy, 
$\degr{\I_{i-1}}{r_{i+1}}{\rho_{i+1}} \leq \degr{\I_{i-1}}{r_{i}}{\rho_{i}}$. 
Therefore 
$\degr{\I_{i-1}}{r_{i+1}}{\rho_{i+1}}  < \degr{\I_{i}}{r_{i+1}}{\rho_{i+1}}$,
or equivalently
$\I_{i-1} ( \rho_{i+1}(\body(r_{i+1})) ) <  \I_{i} ( \rho_{i+1}(\body(r_{i+1})) )$.
By the definition of a trigger application, the only difference between $\I_{i-1}$ and $\I_i$ is that the truth degree of $\Htrig(r_{i}, \rho_{i})$ is strictly increased in $\I_i$ to the value of 
$\I_{i-1} ( \rho_{i}(\body(r_{i})) )$.
Therefore, by the fact that $\I_{i-1} ( \rho_{i+1}(\body(r_{i+1})) ) <  \I_{i} ( \rho_{i+1}(\body(r_{i+1})) )$, the body $\rho_{i+1}(\body(r_{i+1}))$ needs to
mention the atom $\Htrig(r_{i}, \rho_{i})$,
and so, by \Cref{minislargest},
the truth degree of this body in $\I_i$ is no larger than the truth degree of $\Htrig(r_{i}, \rho_{i})$, that is,
$\I_{i} ( \rho_{i+1}(\body(r_{i+1})) ) \leq \I_{i-1} ( \rho_{i}(\body(r_{i})) )$.
This, however, contradicts 
the assumption $\degr{\I_{i-1}}{r_i}{\rho_i} < \degr{\I_{i}}{r_{i+1}}{\rho_{i+1}}$.
\end{proof}

As a consequence of \Cref{lem:monotone.triggers} and
 the definitions of \semiob- and \restr-active triggers, we obtain that 
a truth-greedy chase 
cannot mention two triggers with the same head. 

\begin{corollary}\label{norep}
    Let $\star \in \{\semiob,\restr\}$ and  let $(r_1,\rho_1),(r_2,\rho_2),\dots$ be a truth-greedy $\star$-chase. 
    It holds that $\Htrig(r_i,\rho_i) \neq \Htrig(r_j,\rho_j)$ whenever
    $i \neq j$.
\end{corollary}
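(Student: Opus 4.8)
The plan is to argue by contradiction, leveraging the non-increasing behaviour of trigger truth degrees already established in \Cref{lem:monotone.triggers}. Suppose that $\Htrig(r_i,\rho_i) = \Htrig(r_j,\rho_j)$ for some $i < j$, and write $H$ for this common head atom. The goal is to derive two incompatible inequalities on the trigger degrees $\degr{\I_{i-1}}{r_i}{\rho_i}$ and $\degr{\I_{j-1}}{r_j}{\rho_j}$.

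First I would record a simple monotonicity fact about the interpretations themselves: along any $\star$-chase the sequence $\I_0, \I_1, \dots$ is pointwise non-decreasing. This is immediate from the definition of trigger application together with $\star$-activity — applying $(r_k,\rho_k)$ leaves every atom untouched except $\Htrig(r_k,\rho_k)$, whose degree is reset to $\degr{\I_{k-1}}{r_k}{\rho_k}$, and this value strictly exceeds the old degree $\I_{k-1}(\Htrig(r_k,\rho_k))$ precisely because the trigger is $\semiob$-active (recall that every $\restr$-active trigger is also $\semiob$-active). Hence $\I_{k-1}(A) \leq \I_k(A)$ for every $A \in \gatomn$, and by transitivity $\I_i(H) \leq \I_{j-1}(H)$.

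Next I would use the activity of the later trigger. Since $(r_j,\rho_j)$ is $\star$-active in $\I_{j-1}$, it is in particular $\semiob$-active, so $\degr{\I_{j-1}}{r_j}{\rho_j} > \I_{j-1}(H)$. Combining this with the monotonicity step and with the defining equation $\I_i(H) = \degr{\I_{i-1}}{r_i}{\rho_i}$ (which holds because applying $(r_i,\rho_i)$ sets the degree of $H$ to exactly the body degree) yields $\degr{\I_{j-1}}{r_j}{\rho_j} > \I_{j-1}(H) \geq \I_i(H) = \degr{\I_{i-1}}{r_i}{\rho_i}$, whence $\degr{\I_{j-1}}{r_j}{\rho_j} > \degr{\I_{i-1}}{r_i}{\rho_i}$. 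On the other hand, applying \Cref{lem:monotone.triggers} repeatedly along the steps $i, i+1, \dots, j-1$ gives the reverse chain $\degr{\I_{i-1}}{r_i}{\rho_i} \geq \degr{\I_{j-1}}{r_j}{\rho_j}$. These two inequalities are contradictory, which finishes the argument.

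I expect the only genuinely conceptual step — rather than the purely computational chaining — to be the pointwise monotonicity of the interpretation sequence, since it is what lets us transport the activity condition at step $j$ back to the degree that was assigned to $H$ at step $i$. Everything else is a direct combination of inequalities already available from \Cref{lem:monotone.triggers} and from the definitions of $\semiob$-active and $\restr$-active triggers.
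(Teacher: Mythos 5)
Your proof is correct and follows essentially the same route the paper intends: the paper states \Cref{norep} as a direct consequence of \Cref{lem:monotone.triggers} together with the definitions of $\semiob$- and $\restr$-active triggers, and your argument simply spells out that implied contradiction (pointwise monotonicity of the interpretations plus activity of the later trigger forces $\degr{\I_{j-1}}{r_j}{\rho_j} > \degr{\I_{i-1}}{r_i}{\rho_i}$, contradicting the non-increasing trigger degrees). The explicit justification of the pointwise monotonicity of $\I_0, \I_1, \dots$ is a welcome addition, but it does not constitute a different approach.
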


To exploit truth-greedy chases for efficient reasoning,
we will relate fuzzy chases in \tdat to standard chases in \datex.
In formal terms, we will relate a fuzzy chase of a \tdat program $\Pi$ and a fuzzy dataset $\D$ to a fuzzy chase of  $\Pi$ and $\D^{\crisp}$, where 
$\D^{\crisp}$ is a `crispified' version of $\D$ obtained by setting $\D^{\crisp}(A)=1$ whenever $\D^{\crisp}(A)>0$.
Note that a fuzzy chase for a `crispified'  dataset, corresponds to a standard chase in \datex.
As we show next, if the application of standard chase procedures to a crispified dataset $\D^\crisp$ always terminates (e.g., when a program has no existential variables, it is non-recursive, or weakly-acyclic), then for each
fuzzy chase $s$ there is a 
standard chase $s^\crisp$ such that  $s$  assigns truth degrees to no more atoms than $s^\crisp$.
Formally, for a chase $s$ with corresponding interpretations $\I_0, \I_1, \dots$ we let $\atoms(s)$ be the set of all $A \in \gatomn$ such that $\I_i(A) \neq 0$, for some $i$.
Hence, our result claims that  
$\atoms(s)\subseteq \atoms(s^\crisp)$, shown  below.

\begin{lemma}
\label{chase.embed}
    Let $\star \in \{\semiob,\restr\}$, let $\prog$ be a \tdat program, and let $\D$ be a fuzzy dataset. Assume that every sequence of $\star$-active trigger applications to $\prog$ and $\D^\crisp$ is finite.
    Then, for every $\star$-chase $s$ of $\prog$ and $\D$ there exists a $\star$-chase $s^\crisp$ of $\prog$ and $\D^\crisp$ with $\atoms(s)\subseteq \atoms(s^\crisp)$.    
\end{lemma}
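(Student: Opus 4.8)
The plan is to build $s^{\crisp}$ by \emph{replaying} the trigger sequence of $s$ inside the crisp interpretation and then completing it to a full chase. Write $s=(r_1,\rho_1),(r_2,\rho_2),\dots$ with corresponding interpretations $\I_0,\I_1,\dots$, and process these triggers one by one while maintaining a crisp interpretation that starts at $\I_{\D^{\crisp}}$: whenever the trigger currently under consideration is still $\star$-active in the crisp interpretation I apply it (appending it to $s^{\crisp}$), and otherwise I skip it; once all triggers of $s$ are processed I keep applying arbitrary $\star$-active triggers until none remain. The guiding intuition is that crispification only \emph{raises} truth degrees, so the crisp chase should be able to reach at least the atoms reached by the fuzzy one.

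The heart of the argument is an induction on $i$ establishing that, after the first $i$ triggers of $s$ have been processed, every atom $A$ with $\I_i(A)>0$ carries degree $1$ in the crisp interpretation built so far. The base case $i=0$ covers exactly the atoms on which $\D$ is defined, all of which $\D^{\crisp}$ sets to $1$. For the inductive step, the next trigger $(r_{i+1},\rho_{i+1})$ is $\star$-active in $\I_i$, which forces its body degree $\degr{\I_i}{r_{i+1}}{\rho_{i+1}}$ to be strictly positive; by \Cref{minislargest} every $t$-norm is dominated by the minimum, so a strictly positive conjunction can have only strictly positive conjuncts, and hence every body atom of $\rho_{i+1}(\body(r_{i+1}))$ has positive degree in $\I_i$. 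By the induction hypothesis all these atoms already carry degree $1$ in the crisp interpretation, so---since $t$-norms agree with conjunction on Boolean inputs---the crisp body degree equals $1$. For the semi-oblivious chase the trigger is thus $\semiob$-active in the crisp interpretation precisely when $\Htrig(r_{i+1},\rho_{i+1})$ is not yet present; so whether the trigger is applied or skipped, $\Htrig(r_{i+1},\rho_{i+1})$ carries degree $1$ afterwards, which closes the induction.

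Termination and fairness are then immediate. The sequence recorded in $s^{\crisp}$ consists only of $\star$-active trigger applications to $\prog$ and $\D^{\crisp}$, so it is finite by hypothesis, and the final completion phase enforces the fairness condition while only enlarging $\atoms(s^{\crisp})$. Combining this with the induction yields $\atoms(s)\subseteq\atoms(s^{\crisp})$, since every atom touched by $s$ is either one of the initial atoms of $\D^{\crisp}$ or some $\Htrig(r_i,\rho_i)$, and all of these end up in $\atoms(s^{\crisp})$.

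I expect the genuinely delicate point to be the activeness correspondence for the \emph{restricted} chase. Unlike the semi-oblivious case, $\restr$-activeness is quantified over all atoms $\Htrig'$ obtained by instantiating the nulls of $\Htrig(r_i,\rho_i)$ over $\domn$, and crispification can lift one such instantiation to degree $1$, thereby blocking in the crisp chase a trigger that was $\restr$-active in the fuzzy chase; a naive replay then need not cover $\Htrig(r_i,\rho_i)$ itself. To deal with this I would exploit the observation (recorded just after the definitions) that every $\restr$-active trigger is also $\semiob$-active, so that $s$ is in particular a legal sequence of $\semiob$-active applications and the semi-oblivious construction above can be reused; the care required is to keep the resulting covering chase within the class whose finiteness is guaranteed by the hypothesis, and this bookkeeping is where I expect the main effort to lie.
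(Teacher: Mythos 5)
Your argument for $\star=\semiob$ is essentially the paper's own proof: the paper also replays the triggers of $s$ one by one against a crisp interpretation, applies $(r_{i+1},\rho_{i+1})$ exactly when it is still active on the crisp side and skips it otherwise, and closes the same induction (every atom with positive degree in $\I_i$ has degree $1$ in the crisp sequence built so far), using \Cref{minislargest} to get positivity of each individual body atom, followed by an appeal to the finiteness hypothesis to obtain a completed, fair chase. That half of your proposal is correct and matches the paper.

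The restricted half is where your proposal has a genuine gap, and the repair you sketch cannot close it. Reusing the \semiob-based construction produces a sequence of \semiob-active applications to $\prog$ and $\D^\crisp$; the lemma, however, demands an \restr-chase $s^\crisp$, and the hypothesis only bounds \restr-active sequences, so you obtain neither the right kind of object nor termination for its completion. More importantly, no amount of bookkeeping can save the replay strategy, because the transfer of activeness fails outright. Take $\prog=\{\,Q(x) \to \exists z\, R(x,z)\,\}$ and $\D$ with $\D(Q(a))=0.5$ and $\D(R(a,a))=0.4$. The trigger $(r, x\mapsto a)$ is \restr-active in $\I_\D$ (its body degree $0.5$ strictly exceeds the degree of every instantiation of the null, in particular $0.4$), so the fuzzy \restr-chase creates the null atom $R(a,\mathbf{N})$. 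But in $\D^\crisp$ both facts have degree $1$, the instantiation $R(a,a)$ blocks the trigger, and the empty sequence is the only \restr-chase of $\prog$ and $\D^\crisp$; hence $\atoms(s)\not\subseteq\atoms(s^\crisp)$ for every admissible $s^\crisp$. Two remarks follow. First, the difficulty you flagged is not only yours: the paper's own inductive step treats both cases uniformly, asserting that $(r_{i+1},\rho_{i+1})$ is $\star$-active whenever the crisp degree of $\Htrig(r_{i+1},\rho_{i+1})$ is not $1$ --- which is true for \semiob{} but false for \restr, since blocking may come from a different instantiation of the nulls. Second, the example shows that the statement itself fails for $\star=\restr$ as written, so any complete argument must weaken the conclusion (e.g., to inclusion up to a mapping of nulls into $\domn$) or compare the fuzzy \restr-chase against crisp \semiob-chases; your instinct that this case needs genuinely more care than the naive replay is therefore exactly right, even though your fallback does not resolve it.
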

\begin{proof}
Let $s$ be a $\star$-chase $(r_1,\rho_1), (r_2,\rho_2), \dots$ with corresponding  interpretations $\I_0, \I_1, \I_2, \dots$.
We will show inductively on $i \geq 1$, that for each prefix $s_i=(r_1,\rho_1), \dots, (r_i,\rho_i)$ of $s$,
there exists a finite 
sequence 
$s_i' = (r_1',\rho_1'),  \dots, (r_j',\rho_j')$ of $j \leq i$ $\star$-active trigger applications to $\prog$ and $\D^\crisp$ such that $\atoms(s_i) = \atoms(s_i')$.
By the assumption that every sequence of $\star$-active trigger applications to $\prog$ and $\D^\crisp$ is finite, this implies the existence of the required $s^\crisp$.

In the basis of the induction 
we have $s_1 = (r_1,\rho_1)$, hence 
$(r_1,\rho_1)$ is $\star$-active in $\I_\D$, and so,
$\rho_1(\body(r_1)) > 0$.
Thus, for each 
$A$ mentioned in $\rho_1(\body(r_1))$ we have
$\D(A)>0$ and therefore $\D^\crisp(A)=1$.
If $\D^\crisp (\Htrig(r_1,\rho_1) ) \neq 1$,  then $(r_1,\rho_1)$ is $\star$-active in $\I_{\D^\crisp}$ and we let $s_1' = (r_1,\rho_1)$; otherwise we let $s_1'$ be empty sequence.
In both cases $\atoms(s_1) = \atoms(s_1')$, as required.

In the inductive step we fix $i \in \mathbb{N}$ and assume that the claim is witnessed for $s_i=(r_1,\rho_1), \dots, (r_i,\rho_i)$ by $s_i'=(r_1',\rho_1'),  \dots, (r_j',\rho_j')$.
Since $(r_{i+1},\rho_{i+1})$ is $\star$-active in $\I_i$, for each atom $A$ in the body $\rho_{i+1}(\body(r_{i+1}))$, we have $\I_i(A) >0$, and so, $A \in \atoms(s_i)$.
Thus, by the inductive assumption, 
$A \in \atoms(s_i')$, that is, $\I_j'(A)=1$ for $\I_0', \dots ,\I_j'$ being interpretations corresponding to $s_i'$. 
Similarly as in the basis, if 
$\I_j'(\Htrig(r_{i+1},\rho_{i+1}) ) \neq 1$,  then $(r_{i+1},\rho_{i+1})$ is $\star$-active in $\I_j'$ and we let $s_{i+1}' = (r_1',\rho_1'),  \dots, (r_j',\rho_j'), (r_{i+1}, \rho_{i+1})$; otherwise we let $s_{i+1}'=s_{i}'$.
In both cases $\atoms(s_{i+1}) = \atoms(s_{i+1}')$.
\end{proof}

It is worth observing that checking
the assumption of \Cref{chase.embed}, that is, if all $\star$-trigger applications to $\prog$ and $\D^\crisp$ are finite,  we can use results established for \datex{}.
For example, it is known that for a fixed program and a dataset, all semi-oblivious chases are finite if there exists some finite semi-oblivious chase~\cite{DBLP:journals/fuin/GrahneO18}. 
Since  semi-oblivious chases in \datex coincide with  $\semiob$-chases in \tdat for $\prog$ and $\D^\crisp$, checking if the assumption of
\Cref{chase.embed} is satisfied, reduces to checking if some $\semiob$-chase of $\prog$ and $\D^\crisp$ is finite.

Note that \Cref{chase.embed} and \Cref{norep} 
allow us to determine for which pairs of a program $\prog$ and a dataset $\D$, the truth-greedy chase is guaranteed to terminate.
Indeed, this is the case whenever the standard chase applied to $\D^\crisp$ is guaranteed to terminate.
Furthermore, if we know the lengths of standard chases, we can bound the length of fuzzy truth-greedy chases.

For example, let us consider \emph{weakly acyclic} programs (a syntactic restriction that inhibits the role of nulls in recursion~\cite{dataexchange}),
which constitutes one of the most prominent fragments of \datex
with terminating 
restricted chase~\cite{dataexchange}.
Since standard chases for weakly acyclic programs are known to be polynomially long in the size of a dataset, we can show that truth-greedy chases are also polynomially long.
Moreover, we can show that entailment in weakly acyclic \tdat programs is \ptime-complete for data complexity, that is, of the same computational complexity as in the non-fuzzy weakly acyclic \datex programs.

\begin{theorem} 
\textsc{Entailment} for weakly acyclic \tdat programs is \ptime-complete in data complexity.
\end{theorem}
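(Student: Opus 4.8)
The plan is to prove the two directions separately: membership in \ptime{} and \ptime-hardness. For membership I would show that some truth-greedy \semiob-chase of $\prog$ and $\D$ can be computed in polynomial time in $|\D|$, and then read the answer off the resulting fuzzy universal model. Everything rests on a polynomial length bound for such a chase. Since $\prog$ is weakly acyclic, every semi-oblivious chase sequence of $\prog$ and $\D^\crisp$ terminates, and the standard data-exchange bounds guarantee that these classical chases are polynomially long in $|\D|$; in particular, for any such chase $s^\crisp$ the set $\atoms(s^\crisp)$ of touched atoms has size polynomial in $|\D|$. The hypothesis of \Cref{chase.embed} is therefore satisfied, so for every truth-greedy \semiob-chase $s$ of $\prog$ and $\D$ we obtain $\atoms(s)\subseteq\atoms(s^\crisp)$.

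Next I would combine this with \Cref{norep}. Whenever a trigger $(r_i,\rho_i)$ is applied, its body degree is positive, so the head $\Htrig(r_i,\rho_i)$ acquires a positive truth degree and hence lies in $\atoms(s)$; and by \Cref{norep} the heads of a truth-greedy chase are pairwise distinct. Consequently the number of triggers in $s$ is at most $|\atoms(s)|\le|\atoms(s^\crisp)|$, which is polynomial in $|\D|$. This shows at once that every truth-greedy \semiob-chase is finite and polynomially long. Because the program is fixed in data complexity, at each step there are only polynomially many candidate groundings (the active domain is polynomial and each rule has a constant number of variables), so selecting a \semiob-active trigger of maximal body degree takes polynomial time---recall that the cost of evaluating $t$-norms is treated as constant. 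With polynomially many steps, the entire chase is produced in polynomial time. By \Cref{universalmodel} its result $\I_n$ is a fuzzy universal model, and by \Cref{univ.entailment} the instance is a yes-instance if and only if $\I_n(G)\ge c$, which is a single lookup. This establishes the upper bound.

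For \ptime-hardness I would reduce from entailment in plain Datalog, already \ptime-hard in data complexity. Every Datalog program is trivially a weakly acyclic \tda{} program, hence a weakly acyclic \tdat{} program. Taking $c=K=1$ and a fuzzy dataset that assigns truth degree $1$ to exactly the input facts, \Cref{thm:all1} yields that $(\prog,\D)\ent{1}{}G$ holds precisely when $G$ is entailed by the corresponding classical Datalog program and dataset, so the hardness transfers. The main obstacle is the length bound: a priori a fuzzy chase may re-update the same atom arbitrarily often, as the preceding example shows, so the argument genuinely depends on first restricting to \emph{truth-greedy} chases to obtain the distinct-heads property of \Cref{norep}, and only then invoking \Cref{chase.embed} to cap the number of heads by the size of a terminating classical chase. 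Confirming that the per-step search for a maximal trigger remains polynomial, and that weak acyclicity indeed bounds \emph{all} semi-oblivious sequences over $\D^\crisp$, are the remaining points that need care.
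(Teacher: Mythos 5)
Your proposal is correct and follows essentially the same route as the paper: a polynomial bound on the length of truth-greedy chases via \Cref{chase.embed} and \Cref{norep}, step-by-step greedy trigger selection in polynomial time per step, correctness via \Cref{universalmodel} and \Cref{univ.entailment}, and hardness transferred from classical Datalog via \Cref{thm:all1}. The only divergence is that the paper runs the \emph{restricted} chase, for which the Fagin et al.\ data-exchange bound applies verbatim, whereas you run the semi-oblivious chase; this also works, but it requires the separate (known) fact that weak acyclicity bounds all semi-oblivious sequences polynomially---exactly the point you flagged as needing care---rather than the restricted-chase bound stated in the data-exchange literature you invoke.
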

\begin{proof}%
The lower bound follows from  \ptime-hardness of entailment in weakly-acyclic \datex~\cite{DBLP:conf/rr/CaliGP10} and \Cref{thm:all1}.
For the upper bound assume that we want to check if $\PD \ent{c}{} G$, for a weakly acyclic \tdat program $\prog$, fuzzy dataset $\D$, $G \in \gatomn$, and  $c \in [0,1]$.
By \Cref{univ.entailment} and \Cref{universalmodel}
it suffices to construct a (finite) \restr-chase  for $\prog$ and $\D$, and to check if $\I(G) \geq c$ in the resulting fuzzy interpretation $\I$ of this chase.
Since $\prog$ is weakly-acyclic, by the result for  \datex established by Fagin~et~al.~\cite{dataexchange}, every sequence of $\restr$-active trigger applications to $\prog$ and $\D^\crisp$ is of polynomial length in the size of $\D^\crisp$.
Hence, by \Cref{chase.embed},
$|\atoms(s)|$ is polynomial, for every \restr-chase $s$ of $\prog$ and $\D$. 
Note that this does not mean that all \restr-chases have polynomial lengths. 
However, by \Cref{norep}, we obtain that all truth-greedy \restr-chases have polynomial lengths.

It remains to argue that constructing a truth-greedy \restr-chase of $\prog$ and $\D$ is feasible in polynomial time (in the size of $\D$).
To construct the chase, in every step we compute all \restr-active triggers $(r,\rho)$,  choose one of the triggers with maximal truth degree of $\rho(\body(r))$, and apply it to construct a next fuzzy interpretation.
This can be done in logarithmic space because there are polynomially many triggers to consider (in particular, there are polynomially many groundings $\rho$ to consider, as they assign to variables only those constants, for which some atom in the current interpretation has a non-zero truth degree), so
we can inspect all of them keeping in memory only a single trigger with the highest truth degree of $\rho(\body(r))$.
Applying the computed trigger is also feasible in logarithmic space.
Since a truth-greedy chase has a polynomial length, we can compute the final interpretation
and check if 
the truth degree of $G$ in this  interpretation is at least $c$, in polynomial time.
\end{proof}

Since each program with no existential variables is weakly acyclic, we obtain as a corollary that entailment in \tda (\tdat with no existential variables) is also in \ptime for data complexity.
Matching lower bound follows from \Cref{thm:all1} and 
\ptime-hardness of entailment in Datalog for data complexity.

\begin{corollary}
\label{tDatalog.ptime}
    \textsc{Entailment} for  \tda programs is \ptime-complete in data complexity.
\end{corollary}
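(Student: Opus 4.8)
The plan is to obtain both bounds by invoking results already proved, since \tda is a syntactic subclass of the weakly acyclic programs handled by the preceding theorem. For the upper bound I would first record the key observation that \emph{every} \tda program is weakly acyclic. This holds because a program with no existential variables never introduces fresh nulls during a chase: the head of every trigger is a ground atom over the constants and nulls already present in the dataset, so no new nulls are ever created. Consequently the dependency graph used to define weak acyclicity contains no special (null-propagating) edges at all, and so a fortiori no cycle through a special edge. Hence \tda sits inside weakly acyclic \tdat, and membership in \ptime for data complexity is immediate from the theorem on weakly acyclic programs.

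For the lower bound I would reduce classical Datalog entailment, which is \ptime-hard in data complexity, to \textsc{Entailment} for \tda. Fix a \ptime-hard Datalog program $\prog'$, and given a crisp input dataset $\D'$, construct the \tda instance by retaining the same rules (each classical conjunction interpreted by any fixed $t$-norm, say $\conj_{\min}$) and defining the fuzzy dataset $\D$ by $\D(A)=1$ for every $A\in\D'$. The entailment query is then whether $\PD \ent{}{} G$ with target value $c=1$ (recalling that we have set $K=1$). Since $\D$ assigns only the truth degree $1$, \Cref{thm:all1} tells us that $\PD \ent{}{} G$ holds if and only if $G$ is entailed by $\prog'$ and $\D'$ in the classical sense. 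The reduction fixes the rules and merely relabels each input fact with degree $1$, so it is computable in logarithmic space and acts only on the data; it therefore transfers the \ptime lower bound to the data complexity of \textsc{Entailment} for \tda.

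Combining the two directions yields \ptime-completeness. The only step requiring a moment of justification is the weak-acyclicity observation, since it is what licenses treating the earlier theorem as a black box; but this is essentially definitional once one notes that weak acyclicity constrains only the flow of nulls through existential positions, of which an existential-free program has none. Everything else is a direct application of \Cref{thm:all1} together with the classical \ptime-hardness of Datalog, so I do not anticipate any real obstacle here.
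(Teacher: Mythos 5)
Your proposal is correct and follows essentially the same route as the paper: the upper bound comes from observing that existential-free programs are trivially weakly acyclic (no special edges in the dependency graph) and invoking the weakly acyclic theorem, and the lower bound transfers \ptime-hardness of classical Datalog via \Cref{thm:all1}. Your write-up merely spells out details the paper leaves implicit (the dependency-graph justification and the explicit degree-$1$ relabelling reduction), but the ideas coincide.
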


\section{Adding Negation and Other Unary Operators}\label{sec:negation}

In this section, we will show how to extend our fuzzy setting with unary operators (also studied under the name of \emph{negators} \cite{fodor2000fuzzy}) without negative impact on the complexity of reasoning.
Unary operators are interpreted in our fuzzy setting as functions mapping a 
truth degree of a ground atom into another truth degree.
A flagship unary operator is negation, which in the fuzzy setting can be interpreted in various ways.
However, 
there are also many other unary operators worth considering. For instance, threshold operators $\Delta_T$, for $T \in [0,1]$ which assign truth degree 1 to atoms which have truth degree at least $T$.
Below we present semantics of two versions of fuzzy negation and of the threshold operator:
\begin{align*}
\I(\neg A) = 1 - \I(A),
&&
\I(\sim\!A) =
\begin{cases}
    1, & \text{if } \I(A)=0, \\
    0, & \text{otherwise},
\end{cases}
&&
\I(\Delta_T A) = \begin{cases}
    1, & \text{if } \I(A) \geq T, \\
    0, & \text{otherwise}.
\end{cases}
\end{align*}

\noindent Formally, we let
a unary operator be any computable function $U: [0,1] \longrightarrow [0,1]$. 
A \tdatu program is defined similarly as a \tda program, but body atoms in rules can be preceded by arbitrary unary operators.\footnote{Similarly as in the case of $t$-norms we slightly abuse notation by using the same symbol for a unary operator and a function interpreting this operator.} Hence, rules of a \tdatu program are of the form
\begin{align*}
    U_1 R_1(\mathbf{x_1}) \conj_r \cdots \conj_r U_\ell R_\ell(\mathbf{x_\ell}) & \rightarrow  S(\mathbf{x_h}, \mathbf{z}),
\end{align*}
where each $U_i$ is a unary operator or is empty.
As usual in Datalog with negation~\cite{DBLP:books/aw/AbiteboulHV95}, we define a notion of a stratified program as follows.
We let 
a \emph{stratification} of a \tdatu program $\Pi$ be
any function
$\sigma$ mapping predicates mentioned in $\prog$ to positive integers
such that  for each rule $r \in \prog$ and all
 predicates $P$, $P^+$, and $P^-$ mentioned, respectively, in the head, in body atoms not using unary operators, and in body atoms using unary operators of $r$, 
it holds that 
$\sigma(P^+) \leq \sigma(P)$ 
and $\sigma(P^-) < \sigma(P)$.
We will  treat a stratification $\sigma$ as a partition of $\prog$ into 
$\Pi_1,\dots,\Pi_n$ 
such that $n$ is the maximal value assigned by $\sigma$ and $\prog_i$ consists of all rules in $\prog$ with heads $P$ such that $\sigma(P)=i$.
A program is \emph{stratifiable} (or stratified) if it has some stratification.

Semantics of unary operators is straight forward, namely 
$\I( U A) = U(\I(A))$ for each fuzzy interpretation $\I$ and  unary operator $U$.
The definition of entailment, however, is more complex;
we will adapt the procedural semantics used in the non-fuzzy setting~\cite{DBLP:books/aw/AbiteboulHV95}.
To this end, we start by considering \emph{semi-positive} \tdatu programs, in which  unary operators appear only in front of extensional predicates (i.e., predicates which do not occur  rule heads).
We can use the argumentation from the proof of \Cref{minimalmodel} to show that semi-positive \tdatu programs, similarly to \tda programs,  have the unique fuzzy model property. 
\begin{proposition}\label{uniquesemi}
Each pair of a semi-positive \tdatu program $\prog$ 
and a  fuzzy dataset $\D$ has a unique
fuzzy universal model.
\end{proposition}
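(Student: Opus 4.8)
The plan is to reuse the infimum (least-model) construction from the proof of \Cref{minimalmodel}, the only genuinely new ingredient being the treatment of the unary operators. The first thing I would pin down is the status of the extensional predicates: since they occur in no rule head, no rule can ever raise their truth degrees, and, as in the non-fuzzy procedural semantics we are adapting, I read them under the closed-world assumption, so that the relevant class $S$ consists of those fuzzy models of $\prog$ and $\D$ that assign exactly $\I_\D(A)$ to every extensional atom $A$. This class is non-empty (e.g. the interpretation keeping the extensional atoms at $\I_\D$ and sending every other atom to $1$ lies in it). I then define the fuzzy interpretation $\I$ by $\I(A) = \inf\{\I'(A) \mid \I' \in S\}$ for every $A \in \gatomn$. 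Exactly as in \Cref{minimalmodel}, once $\I \in S$ is established the rest is immediate: the identity on $\domn$ witnesses $\I \ndfh \I'$ for each $\I' \in S$, so $\I$ is a fuzzy universal model, and since $\I$ lies pointwise below every member of $S$ it is the unique one. Since $\I$ agrees with $\I_\D$ on extensional atoms and dominates $\I_\D$ elsewhere, it is a fuzzy model of $\D$, so everything reduces to checking that $\I$ satisfies every rule.

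The key step is the monotonicity argument establishing $\I \in S$. Fix a rule $r$ and a grounding $\rho$, and take any $\I' \in S$. The ground body $\rho(\body(r))$ is a $\conj_r$-combination of literals of two kinds: plain intensional atoms $B$, for which $\I(B) \le \I'(B)$ holds by the definition of the infimum; and operated atoms $U_i R_i(\mathbf{t})$ with $R_i$ extensional, for which $\I(U_i R_i(\mathbf{t})) = U_i(\I_\D(R_i(\mathbf{t}))) = \I'(U_i R_i(\mathbf{t}))$, since every model in $S$ assigns $R_i(\mathbf{t})$ the common value $\I_\D(R_i(\mathbf{t}))$. Hence on each argument of the $t$-norm $\I$ is dominated by $\I'$, and monotonicity of $\conj_r$ gives $\I(\rho(\body(r))) \le \I'(\rho(\body(r)))$; as $\I'$ satisfies $r$, this is in turn $\le \I'(\rho(\head(r)))$. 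Taking the infimum over all $\I' \in S$ gives $\I(\rho(\body(r))) \le \I(\rho(\head(r)))$, so $\I$ satisfies $r$; therefore $\I \in S$ and the proposition follows.

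The one place where this departs from \Cref{minimalmodel}, and the main obstacle, is the non-monotonicity of arbitrary unary operators: for a general $U$ the pointwise inequality $\I(A) \le \I'(A)$ does not entail $U(\I(A)) \le U(\I'(A))$, so the infimum argument collapses if operators may sit in front of atoms whose value varies across models. Semi-positivity is exactly what neutralises this, since confining the operators to extensional atoms, whose values are frozen to $\I_\D$ by the closed-world reading, turns every operated subterm into a constant common to $\I$ and to every $\I' \in S$. This restores the ``$\le$ on each $\conj_r$-argument'' property on which the least-model construction rests; without the restriction (an operator in front of an intensional predicate, or extensional atoms left open-world) a pointwise-minimum model need not exist at all, so the restriction is essential rather than merely convenient.
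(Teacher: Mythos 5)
Your proof is correct, but it takes a genuinely different route from the paper's. The paper (in the appendix, where this proposition reappears as \Cref{semipos}) argues by a \emph{syntactic reduction}: for each unary operator $U$ and extensional predicate $R$ it introduces a fresh extensional predicate $R^U$, sets $\D'(R^U(\mathbf{c})) = U(\D(R(\mathbf{c})))$ wherever $\D(R(\mathbf{c}))$ is defined, replaces every body literal $U\,R$ by $R^U$, and then invokes \Cref{minimalmodel} on the resulting operator-free \tda program $\prog'$ and dataset $\D'$. You instead rerun the infimum construction of \Cref{minimalmodel} directly, after restricting the model class to interpretations that agree with $\I_\D$ \emph{exactly} on extensional atoms. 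These are two implementations of the same underlying insight---semi-positivity freezes every operated subterm to a constant, so non-monotonicity of $U$ never interferes with the least-model argument---and both implicitly replace the paper's official open-world notion of ``fuzzy model of $\D$'' by a closed-world reading on the extensional part. As you correctly observe, this reinterpretation is not cosmetic but necessary: under the literal definition from \Cref{sec:universal}, a program such as $\sim\!R(x) \rightarrow P(x)$ over the empty dataset has no fuzzy universal model at all, since open-world models may raise $\I(R(a))$ and thereby lower the required value of $P(a)$ below what closed-world models force. Your version is more explicit about this point than the paper, and it also evaluates operated atoms as $U(\I_\D(R(\mathbf{c})))$ on \emph{all} ground atoms, which treats operators with $U(0)>0$ (such as $\sim$) more faithfully to the declared semantics $\I(U A)=U(\I(A))$ than the paper's translation, which only precomputes values for atoms occurring in the dataset. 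What the paper's reduction buys in exchange is reuse: the same pair $(\prog',\D')$ is subsequently used to define the chase semantics on semi-positive programs and to prove the stratification results (\Cref{semiposstrat}), whereas your semantic restriction would need to be converted into such a reduction, or those later arguments reworked, to serve the same purpose.
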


\noindent 
Next, we exploit  \Cref{uniquesemi}, to define semantics for stratifiable programs.
Given a stratification $\prog_1, \dots, \prog_n$ of $\prog$ and a fuzzy dataset $\D$ we let 
$\I_0,\I_1, \dots, \I_n$ be a sequence of fuzzy interpretations such that $\I_0 = \I_\D$ and each $\I_{i}$ with $i>1$ is the  fuzzy universal model of $\prog_i$ and (the dataset representation of) $\D_i$.
By the definition,  each $\prog_i$ is semi-positive, so $\I_{i}$ is well defined by \Cref{uniquesemi}.
We call $\I_n$ the result of applying $\prog_1, \dots, \prog_n$ to $\D$.
Similarly, as in the non-fuzzy setting, we can show that each stratification results in the same interpretation.

\begin{proposition}\label{allstrat}
Let $\prog$ be a stratifiable \tdatu program and $\D$ a fuzzy dataset.
Applying any stratification of $\prog$ to $\D$ results in the same fuzzy interpretation.
\end{proposition}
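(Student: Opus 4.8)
The plan is to prove \Cref{allstrat} by exhibiting a \emph{stratification-independent} value for every ground atom and showing that every stratification computes exactly this value. First I would record two locality facts about the per-stratum construction. Because $\sigma(P)=i$ forces \emph{all} rules with head $P$ into $\prog_i$, each intensional predicate is ``defined'' in a unique stratum; and since the universal model of the semi-positive program $\prog_i$ over the input $\I_{i-1}$ is the least fuzzy model extending $\I_{i-1}$ (unique by \Cref{uniquesemi}), processing stratum $i$ can only raise the degrees of atoms whose predicate lies in stratum $i$, leaving every other atom at its input value. Consequently the final interpretation $\I_n$ finalises the predicate $P$ exactly at stratum $\sigma(P)$, using as inputs the already-finalised degrees of the predicates on which $P$ depends.

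The key device is a canonical \emph{negation rank} $\nu(P)$, defined as the largest number of unary-operator edges on any path of the predicate-dependency graph ending at $P$ (with extensional predicates having rank $0$). Stratifiability is exactly the statement that no dependency cycle passes through a unary operator, so $\nu$ is finite and well-defined, and any two predicates of equal rank depend on one another only through \emph{positive} body atoms, while every atom occurring under a unary operator in a rule defining a rank-$k$ predicate has rank at most $k-1$. I would then prove, by induction on $k$, that \emph{every} stratification assigns the same final degrees to all rank-$k$ predicates. In the step, the inductive hypothesis fixes the degrees of all predicates of rank below $k$; since every unary operator occurring in a rule defining a rank-$k$ predicate is applied only to such lower-rank, now-constant atoms, the rank-$k$ fragment becomes a \emph{positive} $t$-norm program over a fixed input, whose unique least fuzzy model is determined by \Cref{minimalmodel} and \Cref{uniquesemi} and is therefore the same for all stratifications.

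The step that needs the most care --- and the main obstacle --- is reconciling this monolithic rank-$k$ computation with the fact that a given stratification may split the rank-$k$ predicates across several consecutive strata $\prog_{i},\dots,\prog_{i+m}$. Here I would invoke a positive-layering sublemma: because $t$-norms are monotone, the immediate-consequence operator of the positive rank-$k$ fragment is monotone, so computing its least fixpoint in any layering that respects the positive dependencies yields the same result as computing it in one block. The delicate point throughout is that non-monotone unary operators (e.g.\ $\neg$ or $\Delta_T$) could in principle make the outcome order-sensitive; the whole argument is organised so that every such operator is only ever evaluated on an atom whose degree is already final --- which is precisely what the stratification condition $\sigma(P^-)<\sigma(P)$ secures and what the induction on $\nu$ is designed to respect. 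Once the positive-layering sublemma and the rank induction are in place, \Cref{allstrat} follows, since the two stratifications agree on every predicate rank, hence on $\I_n$.
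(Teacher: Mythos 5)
Your proposal is correct in substance but takes a genuinely different route from the paper. The paper (see \Cref{ap:negation}) transplants the classical argument for stratified Datalog with negation from Abiteboul, Hull and Vianu~\cite{DBLP:books/aw/AbiteboulHV95}: it isolates one key lemma (\Cref{semiposstrat}), stating that for a \emph{semipositive} program every stratification computes the unique fuzzy universal model of \Cref{semipos}, and proves it by observing that the concatenation of the per-stratum truth-greedy chases is itself a finite chase of the whole operator-eliminated program, so \Cref{universalmodel} and \Cref{minimalmodel} force the concatenated result to be that universal model; equivalence of arbitrary stratifications is then inherited from the classical proof of Theorem~15.2.10 with this lemma plugged in. You instead build a self-contained argument: a stratification-free canonical semantics indexed by the negation rank $\nu$, an induction on ranks, and a positive-layering sublemma proved by monotone-fixpoint reasoning rather than by chase concatenation. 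Your sublemma is essentially the fuzzy analogue of \Cref{semiposstrat} restricted to positive programs, so the two proofs share the same technical engine; what your route buys is independence from the structure of the classical proof, at the price of setting up the rank apparatus explicitly.

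One step of your sketch needs repair, though it is repairable with tools you already have. In the inductive step you treat the rank-$k$ fragment as ``a positive $t$-norm program over a fixed input'', the input being the (by the inductive hypothesis, canonical) degrees of all predicates of rank below $k$. But a single stratum of an arbitrary stratification may define predicates of \emph{several different ranks simultaneously}: the conditions $\sigma(P^+)\leq\sigma(P)$ and $\sigma(P^-)<\sigma(P)$ do not force lower-rank predicates into earlier strata. For instance, with rules $A \to Q$ and $U B \conj Q \to P$ one may place $Q$ (rank $0$) and $P$ (rank $1$) in the same stratum. Hence, when a rank-$k$ predicate is computed, the lower-rank atoms it depends on positively need not be ``now-constant'': they are being computed within the same least-model computation, and the inductive hypothesis only tells you what their final values will be, not that the run may treat them as fixed from the start. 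The fix is to state your sublemma for general semipositive programs and arbitrary dependency-respecting layerings (head predicates of later layers never occur in bodies of earlier layers) and apply it twice: once \emph{inside} each mixed stratum, splitting it into its lower-rank part followed by its rank-$k$ part (legitimate, since a rank-$k$ predicate can occur neither positively nor under an operator in a rule whose head has rank below $k$), and once \emph{across} strata, as you already do, to identify the resulting layered computation of the rank-$k$ fragment with its one-block least model. With this strengthening your rank induction closes and \Cref{allstrat} follows.
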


\noindent Establishing \Cref{allstrat} allows us to introduce a standard  definition of entailment in stratified programs.
We say that an atom $G$ is $c$-\emph{entailed}, for any $c \in [0,1]$, by a stratifiable $\prog$ and $\D$ if
$\I_n(G) \geq c$, where $\I$ is the result of applying any  stratification of $\prog$ to $\D$.
We define \textsc{Entailment} problem analogously as in the case of \tdat, namely, given a 
\tdatu program $\prog$, a fuzzy dataset $\D$,  $G \in \gatom$, and $c \in[0,1]$, the problem is to check if 
$\prog$ and $\D$ $c$-entail $G$.

The main result of this section is that adding arbitrary unary operators to \tda does not increase the complexity of entailment checking, as long as the input program is stratifiable.
In other words, entailment checking for stratifiable \tdatu programs is no harder than entailment checking for \tda programs.
As in the case of $t$-norms, our complexity analysis does not take into account the complexity of computing application of unary operators.
Although such computations are usually of low  complexity (e.g., in the case of $\neg$, $\sim$, and $\Delta_T$ introduced at the beginning of this section),
but one can introduce  much more complex unary operators.

\begin{theorem}\label{strat:complex}
    \textsc{Entailment} for stratifiable \tdatu programs is \ptime-complete in data complexity.
\end{theorem}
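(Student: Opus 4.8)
The plan is to treat the two bounds separately, with essentially all the work in the upper bound.

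\textbf{Lower bound.} Every \tda program is a stratifiable \tdatu program that uses no unary operators (take the trivial stratification assigning $1$ to all predicates). Hence \ptime-hardness is inherited directly from \Cref{tDatalog.ptime} (equivalently, from classical Datalog hardness via \Cref{thm:all1}), and no further argument is needed.

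\textbf{Upper bound.} By the procedural semantics together with \Cref{allstrat}, it suffices to fix one stratification $\prog_1, \dots, \prog_n$ of $\prog$, compute the result $\I_n$ of applying it to $\D$ (the sequence $\I_0 = \I_\D, \I_1, \dots, \I_n$, where by \Cref{uniquesemi} each $\I_i$ is the unique fuzzy universal model of the semi-positive program $\prog_i$ over the dataset representation of $\I_{i-1}$), and test whether $\I_n(G) \geq c$. Since the number of strata $n$ is bounded by the number of predicates in $\prog$, it is constant for data complexity, so it is enough to show that each single step $\I_{i-1} \mapsto \I_i$ is computable in time polynomial in $|\D|$. The key observation is that, by semi-positivity, every unary operator occurring in $\prog_i$ sits in front of a predicate that is extensional for $\prog_i$ and whose truth degree is already \emph{frozen} in $\I_{i-1}$: for an operator-prefixed predicate $P^-$ the stratification guarantees $\sigma(P^-) < i$, so $P^-$ does not occur in any head of $\prog_i$ and its value is fixed in an earlier stratum. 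I would therefore \emph{pre-evaluate} the operators: because \tdatu rules have no existential variables, no nulls are ever introduced and every relevant grounding uses only constants from the active domain of $\D$ together with the finitely many constants of $\prog$ and $G$, a set of polynomial size in $|\D|$. For each subexpression $U_j R_j(\mathbf{x_j})$ of a rule of $\prog_i$ and each grounding $\mathbf{a}$ over this set, I compute the constant $U_j(\I_{i-1}(R_j(\mathbf{a})))$, introduce a fresh extensional predicate $R_j^{U_j}$, and set its degree to this value (dropping the atom when the value is $0$, in keeping with fuzzy datasets assigning only values in $(0,1]$). Replacing each $U_j R_j(\mathbf{x_j})$ by $R_j^{U_j}(\mathbf{x_j})$ turns $\prog_i$ into a pure \tda program $\prog_i'$ over the extended dataset.

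The correctness of this rewrite is that the unique fuzzy universal model of $\prog_i'$ coincides with that of the semi-positive $\prog_i$ over $\I_{i-1}$, since the operators act as constants on frozen atoms and these constants are faithfully encoded by the fresh predicates. Given this, \Cref{tDatalog.ptime} computes the model of $\prog_i'$ in polynomial time, yielding $\I_i$; chaining the $n = O(1)$ polynomial-time stratum computations gives an overall polynomial-time procedure, and a final comparison $\I_n(G) \geq c$ decides entailment.

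\textbf{Main obstacle.} The delicate point is that operators such as $\neg$ and $\sim$ map truth degree $0$ to $1$, so in principle infinitely many ground atoms—those built from constants outside the active domain—could acquire a positive degree and propagate through the stratum, which would invalidate the active-domain restriction underlying the polynomial bound. I would resolve this exactly as in classical stratified Datalog with negation: under the standard safety (range-restriction) condition that every variable occurring under a unary operator also occurs in an operator-free body atom, the evaluation of each stratum is confined to the active domain, and the restriction of $\I_i$ to active-domain atoms determines the truth degree of every atom of $\gatom$ over that domain—in particular of the fixed goal $G$. This is precisely what justifies pre-evaluating the operators only on the polynomially many active-domain groundings, and it is the step I expect to require the most care to state cleanly.
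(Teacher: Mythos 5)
Your proposal is correct and follows essentially the same route as the paper: \ptime-hardness inherited from the existential-free case, and for the upper bound a stratum-by-stratum computation in which each semi-positive stratum is reduced to a pure \tda program by pre-evaluating every operator-prefixed atom into a fresh extensional predicate (exactly the paper's $R^U$ construction in \Cref{semipos}) and then running the truth-greedy chase underlying \Cref{tDatalog.ptime}. The only divergence is your ``main obstacle'': you handle operators mapping degree $0$ to a positive degree by adding a safety (range-restriction) assumption, whereas the paper sidesteps this definitionally---its procedural semantics evaluates a unary operator $U$ only on atoms for which the previous stratum's (finite) result is defined, so atoms outside the active domain never acquire positive degree in the first place.
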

\begin{proof}[Proof sketch]
The lower bound is inherited from Datalog.
For the upper bound, we compute any stratification $\prog_1, \dots, \prog_n$ of the input program.
Then we compute minimal fuzzy models $\I_1, \dots, \I_n$ corresponding to application of the stratification.
To compute each $\I_{i+1}$ from $\I_{i}$ and $\prog_{i+1}$
we apply
the truth-greedy chase.
By our results on truth-greedy chases in \Cref{sec:termination},  this procedure allows us to construct $\I_n$  in polynomial time with respect to $\D$.
Finally, checking if $\I_n(G) \geq c$, for input $G$ and $c$, is clearly  feasible in polynomial time.
\end{proof}

%
%
%

\section{Discussion and Future Work}
\label{discussion}
\paragraph{Summary.}
We have introduced \tdat, an extension of \datex  for reasoning about uncertain information,  which can be used for neuro-symbolic reasoning.
By allowing for arbitrary $t$-norms in the place of conjunctions in standard Datalog programs,  \tdat provides us with a highly  flexible fuzzy formalism.
We have established a chase-based reasoning technique applicable to the fuzzy setting of \tdat, which we used for computational complexity analysis. 
For example, the complexity of entailment in weakly-acyclic \tdat matches that of \datex, namely it is \ptime-complete for data complexity.
Our fuzzy chase procedure is worst-case optimal  for such reasoning.
Moreover, we  showed that \tda can be extended with arbitrary (fuzzy) unary operators without a negative impact on the complexity.
Our development has purposefully followed that of \datex in order to leverage the wide range of results known in the classical case. 
The obtained results illustrate the advantages of this approach and lay the foundation to
richer fuzzy ontology languages in which reasoning can be performed efficiently, akin to the Datalog$^\pm$ family of languages~\cite{gottlob2014datalog}.

\paragraph{Discussion.}
It is worth discussing some generalisations of the presented work.
Recall that to simplify the presentation, we have focused on the proofs on $K$-satisfiability with  $K=1$. 
However, our techniques are not specific to this case, and our results can be extended to  arbitrary  $K\in[0,1]$. 
Furthermore, observe that the presented syntax of  \tdat requires that each rule mentions at most one $t$-norm.
For example, we do not allow for a rule of the form $P(x) \conjL Q(x) \conj_{\product}  R(x) \to S(x)$, which mentions two different $t$-norms.
We have imposed such a restriction in order to obtain associativity of operators in rule bodies, which is the case in standard Datalog and logic programming.
However, if we fix the order of $t$-norms application in rule bodies, we can use multiple 
$t$-norms within a single rule and it would not impact our technical results.
Furthermore, we have only explicitly considered atomic fact entailment and data complexity measure in the paper, but
our techniques are also applicable to more complex queries and allow for performing analysis of other measures of computational complexity (e.g., combined or program complexity).

\paragraph{Future Work.}
Our research on \tdat is motivated by the  need for practical and  powerful reasoning formalism for heterogeneous data sources. We have intentionally chosen to align the development of \tdat  closely  with \datex,
to exploit results known for the latter.
Another great benefit of this approach  is that it opens a way for implementing 
reasoning procedures for 
\tdat by extending existing \datex reasoning systems. 
In particular, developing implementations on top of established chase-based reasoners, such as Vadalog~\cite{DBLP:journals/is/BellomariniBGS22}, form an attractive opportunity for future work.

The  results obtained in the paper also introduce interesting theoretical research directions.
Notice that in the presented results  we focus  on programs with finite \datex chase which, as we have shown, allows us to  construct finite chases in \tdat.
However, there are prominent  fragments of \datex, such as (\emph{weakly}) \emph{guarded}~\cite{DBLP:journals/jair/CaliGK13} or \emph{warded}~\cite{DBLP:journals/tods/BergerGPS22} programs, where entailment is decidable even though the chase is generally not finite. 
Our truth-greedy chase technique does not provide us with an appropriate tool for reasoning in such cases.
For instance, consider the behaviour of  a truth-greedy chase applied to a guarded program as described below.

\begin{example}
Consider a fuzzy dataset $\D$ with $\D(R(a,b))=1$ and $\D(A)<1$ for all other atoms $A$ for which $\D(A)$ is defined.
Moreover, consider a guarded \tdat program $\prog$ which, among others, contains a rule $r$ of the form
$
    R(x,y) \rightarrow \exists z\, R(y,z).
$
A truth-greedy chase of $\prog$ and $\D$ applies this single rule infinitely many times before applying any other rule in \prog. 
Since other rules are not applied, it is possible that some triggers remain active infinitely.
Hence  
the fairness condition---\Cref{cond.fairness} in the definition of a chase---does not hold.
\end{example}
\noindent The question of how to decide entailment in \tdat with  infinite chases presents an interesting research challenge for future work. Decidability in guarded and warded \datex, in particular, follows from the fact that the infinite part of the chase only repeats certain patterns. 
In future we will aim to exploit this idea in the fuzzy setting of \tdat, to introduce adequate fuzzy chase procedures and establish computational complexity results for guarded and warded fragments of \tdat.

 \section*{Acknowledgements}
 Matthias Lanzinger
 acknowledges support by the Royal Society “RAISON DATA” project
 (Reference No. RP\textbackslash{}R1\textbackslash{}201074) and by the Vienna Science and Technology Fund (WWTF) [10.47379/ICT2201, 10.47379/VRG18013, 10.47379/NXT22018].
 Przemysław A Wałęga was supported bu
the EPSRC projects OASIS (EP/S032347/1), ConCuR
(EP/V050869/1) and UK FIRES (EP/S019111/1), as well as SIRIUS Centre for Scalable Data Access and Samsung Research UK.
 Stefano Sferrazza was supported by the Vienna Science and Technology Fund (WWTF) [10.47379/VRG18013]. 
 Georg Gottlob is a Royal Society Research Professor and
 acknowledges support by the Royal Society in this role through the
 “RAISON DATA” project (Reference
 No. RP\textbackslash{}R1\textbackslash{}201074).
For the purpose of Open Access,
the authors have applied a CC BY public copyright licence
to any Author Accepted Manuscript (AAM) version arising
from this submission.

\bibliographystyle{plain}
\bibliography{references}

\begin{thebibliography}{10}

\bibitem{DBLP:books/aw/AbiteboulHV95}
Serge Abiteboul, Richard Hull, and Victor Vianu.
\newblock {\em {Foundations of Databases}}.
\newblock Addison-Wesley, 1995.

\bibitem{achs1995fuzzy}
{\'A}gnes Achs and Attila Kiss.
\newblock Fuzzy extension of {D}atalog.
\newblock {\em Acta Cybernetica}, 12(2):153--166, 1995.

\bibitem{alviano2023generative}
Mario Alviano, Matthias Lanzinger, Michael Morak, and Andreas Pieris.
\newblock Generative {D}atalog with stable negation.
\newblock In {\em Proceedings of {PODS}}, pages 21--32, 2023.

\bibitem{amel2019shallow}
Kay~R Amel.
\newblock From shallow to deep interactions between knowledge representation,
  reasoning and machine learning.
\newblock In {\em Proceedings of SUM}, pages 16--18, 2019.

\bibitem{baget2011rules}
Jean-Fran{\c{c}}ois Baget, Michel Lecl{\`e}re, Marie-Laure Mugnier, and Eric
  Salvat.
\newblock On rules with existential variables: Walking the decidability line.
\newblock {\em Artificial Intelligence}, 175(9-10):1620--1654, 2011.

\bibitem{bardin2023machine}
S{\'e}bastien Bardin, Somesh Jha, and Vijay Ganesh.
\newblock Machine learning and logical reasoning: The new frontier ({D}agstuhl
  {S}eminar 22291).
\newblock In {\em Dagstuhl Reports}, 2023.

\bibitem{beeri1981implication}
Catriel Beeri and Moshe~Y Vardi.
\newblock The implication problem for data dependencies.
\newblock In {\em Automata, Languages and Programming: Eighth Colloquium Acre
  (Akko)}, pages 73--85, 1981.

\bibitem{DBLP:journals/is/BellomariniBGS22}
Luigi Bellomarini, Davide Benedetto, Georg Gottlob, and Emanuel Sallinger.
\newblock Vadalog: {A} modern architecture for automated reasoning with large
  knowledge graphs.
\newblock {\em Information Systems}, 105:101528, 2022.

\bibitem{DBLP:journals/tods/BergerGPS22}
Gerald Berger, Georg Gottlob, Andreas Pieris, and Emanuel Sallinger.
\newblock The space-efficient core of vadalog.
\newblock {\em {ACM} Trans. Database Syst.}, 47(1):1:1--1:46, 2022.

\bibitem{bobillo2008fuzzydl}
Fernando Bobillo and Umberto Straccia.
\newblock fuzzy{DL}: An expressive fuzzy description logic reasoner.
\newblock In {\em Proceedings of IEEE International Conference on Fuzzy
  Systems}, pages 923--930, 2008.

\bibitem{DBLP:journals/mst/CalauttiP21}
Marco Calautti and Andreas Pieris.
\newblock Semi-oblivious chase termination: The sticky case.
\newblock {\em Theory of Computing Systems}, 65(1):84--121, 2021.

\bibitem{DBLP:journals/jair/CaliGK13}
Andrea Cal{\`{i}}, Georg Gottlob, and Michael Kifer.
\newblock {Taming the infinite chase: Query answering under expressive
  relational constraints}.
\newblock {\em Journal of Artificial Intelligence Research}, 48:115--174, 2013.

\bibitem{cali2009general}
Andrea Cal{\`\i}, Georg Gottlob, and Thomas Lukasiewicz.
\newblock A general {D}atalog-based framework for tractable query answering
  over ontologies.
\newblock In {\em Proceedings of {PODS}}, pages 77--86, 2009.

\bibitem{DBLP:conf/rr/CaliGP10}
Andrea Cal{\`{\i}}, Georg Gottlob, and Andreas Pieris.
\newblock Query answering under non-guarded rules in datalog+/-.
\newblock In {\em Proceedings of Web Reasoning and Rule Systems}, pages 1--17,
  2010.

\bibitem{DBLP:journals/fss/CornejoLM18}
Maria~Eugenia Cornejo, David Lobo, and Jes{\'{u}}s Medina.
\newblock Syntax and semantics of multi-adjoint normal logic programming.
\newblock {\em Fuzzy Sets and Systems}, 345:41--62, 2018.

\bibitem{de2020statistical}
Luc De~Raedt, Sebastijan Duman{\v{c}}i{\'c}, Robin Manhaeve, and Giuseppe
  Marra.
\newblock From statistical relational to neuro-symbolic artificial
  intelligence.
\newblock {\em arXiv preprint arXiv:2003.08316}, 2020.

\bibitem{DBLP:journals/fss/Ebrahim01}
Rafee Ebrahim.
\newblock Fuzzy logic programming.
\newblock {\em Fuzzy Sets and Systems}, 117(2):215--230, 2001.

\bibitem{DBLP:journals/tnn/EklundK92}
Patrik Eklund and Frank Klawonn.
\newblock Neural fuzzy logic programming.
\newblock {\em IEEE Transactions on Neural Networks}, 3(5):815--818, 1992.

\bibitem{dataexchange}
Ronald Fagin, Phokion~G. Kolaitis, Ren{\'{e}}e~J. Miller, and Lucian Popa.
\newblock Data exchange: semantics and query answering.
\newblock {\em Theoretical Computer Science}, 336(1):89--124, 2005.

\bibitem{farahbod2012comparison}
Fahimeh Farahbod and Mahdi Eftekhari.
\newblock Comparison of different t-norm operators in classification problems.
\newblock {\em arXiv preprint arXiv:1208.1955}, 2012.

\bibitem{fodor2000fuzzy}
J{\'a}nos Fodor and Ronald~R Yager.
\newblock Fuzzy set-theoretic operators and quantifiers.
\newblock In {\em Fundamentals of Fuzzy Sets}, pages 125--193. 2000.

\bibitem{garcez2019neural}
Artur~d'Avila Garcez, Marco Gori, Luis~C Lamb, Luciano Serafini, Michael
  Spranger, and Son~N Tran.
\newblock Neural-symbolic computing: An effective methodology for principled
  integration of machine learning and reasoning.
\newblock {\em arXiv preprint arXiv:1905.06088}, 2019.

\bibitem{garcez2023neurosymbolic}
Artur~d’Avila Garcez and Luis~C Lamb.
\newblock Neurosymbolic {AI}: The 3rd wave.
\newblock {\em Artificial Intelligence Review}, pages 1--20, 2023.

\bibitem{DBLP:conf/kr/GottlobLP14}
Georg Gottlob, Thomas Lukasiewicz, and Andreas Pieris.
\newblock Datalog+/-: Questions and answers.
\newblock In {\em Proceedings of {KR}}, 2014.

\bibitem{gottlob2014datalog}
Georg Gottlob, Thomas Lukasiewicz, and Andreas Pieris.
\newblock Datalog+/-: Questions and answers.
\newblock In {\em Proceedings {KR}}, 2014.

\bibitem{DBLP:journals/fuin/GrahneO18}
G{\"{o}}sta Grahne and Adrian Onet.
\newblock Anatomy of the chase.
\newblock {\em Fundamenta Informaticae}, 157(3):221--270, 2018.

\bibitem{DBLP:books/kl/Hajek98}
Petr H{\'{a}}jek.
\newblock {\em {Metamathematics of Fuzzy Logic}}, volume~4 of {\em Trends in
  Logic}.
\newblock Kluwer, 1998.

\bibitem{DBLP:journals/tfs/IranzoS18}
Pascual~Juli{\'{a}}n Iranzo and Fernando S{\'{a}}enz{-}P{\'{e}}rez.
\newblock A fuzzy {D}atalog deductive database system.
\newblock {\em {IEEE} Transactions on Fuzzy Systems}, 26(5):2634--2648, 2018.

\bibitem{tnormsbook}
Erich{-}Peter Klement, Radko Mesiar, and Endre Pap.
\newblock {\em Triangular Norms}, volume~8 of {\em Trends in Logic}.
\newblock Springer, 2000.

\bibitem{lanzinger2022mvdatalog}
Matthias Lanzinger, Stefano Sferrazza, and Georg Gottlob.
\newblock {MV-D}atalog+-: Effective rule-based reasoning with uncertain
  observations, 2022.

\bibitem{lanzinger2022new}
Matthias Lanzinger, Stefano Sferrazza, and Georg Gottlob.
\newblock New perspectives for fuzzy {D}atalog (extended abstract).
\newblock In {\em Proceedings of {Datalog} 2.0}, pages 42--47, 2022.

\bibitem{luciano2022logic}
FBK Luciano~Serafini, Samy Badreddine, AI~Sony, and Japan~Ivan Donadello.
\newblock Logic tensor networks: Theory and applications.
\newblock {\em Neuro-Symbolic Artificial Intelligence: The State of the Art},
  342:370, 2022.

\bibitem{lukasiewicz2008managing}
Thomas Lukasiewicz and Umberto Straccia.
\newblock Managing uncertainty and vagueness in description logics for the
  semantic web.
\newblock {\em Journal of Web Semantics}, 6(4):291--308, 2008.

\bibitem{DBLP:journals/ai/ManhaeveDKDR21}
Robin Manhaeve, Sebastijan Dumancic, Angelika Kimmig, Thomas Demeester, and
  Luc~De Raedt.
\newblock {Neural probabilistic logic programming in DeepProbLog}.
\newblock {\em Artificial Intelligence}, 298:103504, 2021.

\bibitem{marra2019lyrics}
Giuseppe Marra, Francesco Giannini, Michelangelo Diligenti, and Marco Gori.
\newblock Lyrics: A general interface layer to integrate logic inference and
  deep learning.
\newblock In {\em Proceedings of ECML PKDD}, pages 283--298, 2019.

\bibitem{DBLP:conf/epia/MedinaOV01}
Jes{\'{u}}s Medina, Manuel Ojeda{-}Aciego, and Peter Vojt{\'{a}}s.
\newblock A procedural semantics for multi-adjoint logic programming.
\newblock In {\em Proceedings of {EPIA}}, pages 290--297, 2001.

\bibitem{DBLP:journals/cacm/Miller95}
George~A Miller.
\newblock {WordNet}: A lexical database for {E}nglish.
\newblock {\em Communications of the ACM}, 38(11):39--41, 1995.

\bibitem{serafini2016learning}
Luciano Serafini and Artur~S d’Avila Garcez.
\newblock Learning and reasoning with logic tensor networks.
\newblock In {\em Proceedings of {AIxIA}}, pages 334--348, 2016.

\bibitem{skryagin2022neural}
Arseny Skryagin, Wolfgang Stammer, Daniel Ochs, Devendra~Singh Dhami, and
  Kristian Kersting.
\newblock Neural-probabilistic answer set programming.
\newblock In {\em Proceedings of {KR}}, pages 463--473, 2022.

\bibitem{straccia2001reasoning}
Umberto Straccia.
\newblock Reasoning within fuzzy description logics.
\newblock {\em Journal of Artificial Intelligence Research}, 14:137--166, 2001.

\bibitem{yang2023neurasp}
Zhun Yang, Adam Ishay, and Joohyung Lee.
\newblock {NeurASP}: Embracing neural networks into answer set programming.
\newblock {\em arXiv preprint arXiv:2307.07700}, 2023.

\bibitem{DBLP:journals/chinaf/ZhangHX06}
Xiaohong Zhang, Huacan He, and Yang Xu.
\newblock A fuzzy logic system based on {S}chweizer-{S}klar t-norm.
\newblock {\em Science in China Series F: Information Sciences},
  49(2):175--188, 2006.

\end{thebibliography}

\appendix

\section{Details for \Cref{sec:negation}}\label{ap:negation}
We follow \cite{DBLP:books/aw/AbiteboulHV95} in the following definitions as well as the line or argumentation.
We will use a slightly different notation that in the main body of the paper, which is more convenient for proofs.

The \emph{extensional predicates} ($\edb(\Pi)$) of a \tdatu program $\Pi$ are those relation symbols that only occur in rule bodies of $\Pi$. The \emph{intensional predicates} ($\idb(\Pi)$) are those relation symbols that occur in the head of some rule of $\Pi$.
We know by \Cref{chase.embed} that every truth-greedy chase for \tdat programs (and thus also for semi-positive \tdatu programs) is finite. We write $\mathsf{tgc}(\Pi,\D)$ for the result of applying a truth-greedy chase\footnote{Recall that $\semiob$- and $\restr$-chase are equivalent without existential quantifiers.} to $\Pi$ and $\D$. 
Since $\mathsf{tgc}(\Pi,\D)$ is always finite, we will treat $\mathsf{tgc}(\Pi,\D)$ as a fuzzy dataset. As above, for $G \in \gatom$, we write $G \in \D$ to mean that $\D(G)$ is defined.

The notion of semipositive programs is lifted from Datalog$^\neg$. By convention we assume everywhere in the following that $\D$ is defined only for extensional predicates.
\begin{proposition}[cf. Theorem 15.2.2~\cite{DBLP:books/aw/AbiteboulHV95}]
\label{semipos}
Let $\Pi$ be a semipositive program and  $\D$ a fuzzy dataset. Then there is a unique fuzzy universal model of $\Pi$ and $\D$.
\end{proposition}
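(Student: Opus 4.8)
The plan is to mirror the proof of \Cref{minimalmodel} almost verbatim, isolating the single new ingredient, namely the treatment of the unary operators, which semi-positivity confines to extensional predicates. Following the standard semi-positive convention, I would treat the extensional relations as fixed input: let $S$ be the set of all fuzzy models of $\prog$ and $\D$ that assign to every extensional ground atom exactly its value in $\I_\D$. This is well justified because no predicate in $\edb(\prog)$ occurs in a rule head, so nothing in $\prog$ can ever raise these truth degrees in the least model. I then define the candidate $\I$ pointwise by $\I(A) = \inf\{\I'(A) \mid \I' \in S\}$ for every $A \in \gatomn$. By construction $\I$ coincides with $\I_\D$ on extensional atoms and assigns at least $0$ elsewhere, so $\I(A)\geq \I_\D(A)$ for all $A$ and $\I$ is a fuzzy model of $\D$.

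The substantive step is to show that $\I$ is a fuzzy model of $\prog$. Fixing a rule $r$ of the form $U_1 R_1(\mathbf{x_1}) \conj_r \cdots \conj_r U_\ell R_\ell(\mathbf{x_\ell}) \to S(\mathbf{x_h})$, a grounding $\rho$, and any $\I' \in S$, I would first establish the inequality $\I(\rho(\body(r))) \leq \I'(\rho(\body(r)))$. For a body literal $U_i R_i$ carrying a unary operator, $R_i$ is extensional, so $\I$ and $\I'$ agree on $R_i$ (both equal $\I_\D$) and hence $\I(U_i R_i(\cdot)) = \I'(U_i R_i(\cdot))$; for an operator-free body atom $A$ the infimum gives $\I(A) \leq \I'(A)$. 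Since $\conj_r$ is monotone, these literalwise inequalities propagate to the whole body, yielding the desired bound. As $\I'$ satisfies $r$, the body value is in turn bounded by $\I'(\rho(S(\mathbf{x_h})))$; taking the infimum over all $\I' \in S$ and using that the head is a single atom gives $\I(\rho(\body(r))) \leq \I(\rho(S(\mathbf{x_h})))$. Thus every rule is satisfied and $\I \in S$.

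Universality and uniqueness are then routine and go exactly as in \Cref{minimalmodel}: for every $\I' \in S$ the identity function on $\domn$ witnesses $\I \ndfh \I'$, since it fixes all constants and satisfies $\I(A)\leq \I'(A)$, so Conditions (i) and (ii) of \Cref{def:ndfh} hold; and because $\I$ is pointwise least in $S$, no other member of $S$ can be $\ndfh$-below it, whence $\I$ is the unique fuzzy universal model. The step I expect to require the most care is the monotonicity argument for the body in the second paragraph: for a general (possibly intensional) atom a non-monotone operator such as $\neg$ would reverse the infimum inequality and the infimum construction would fail to yield a model. Semi-positivity is precisely what rescues the argument, by guaranteeing that every operator sits on an extensional predicate whose value is pinned across $S$, so that each such literal contributes a fixed constant and the body stays monotone in its free atoms. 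Verifying that this observation combines cleanly with the monotonicity of $\conj_r$ (and, for the interaction with the t-norm, with \Cref{minislargest}) is the heart of the proof; everything else is a transcription of \Cref{minimalmodel}.
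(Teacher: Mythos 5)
Your proof is correct in substance, but it takes a genuinely different route from the paper's. The paper argues by \emph{reduction}: for every unary operator $U$ and every $R \in \edb(\Pi)$ it introduces a fresh extensional predicate $R^U$ with $\D'(R^U(\mathbf{c})) = U(\D(R(\mathbf{c})))$ wherever $\D(R(\mathbf{c}))$ is defined, rewrites each body literal $U\,R$ into $R^U$, and then simply invokes \Cref{minimalmodel} on the resulting operator-free program $\Pi'$ and dataset $\D'$, declaring its unique fuzzy universal model to be the desired one. You instead rerun the infimum construction of \Cref{minimalmodel} directly on $\Pi$, over the class $S$ of models pinned to $\I_\D$ on extensional atoms, and you isolate exactly where semi-positivity is needed: operator-guarded literals are constant across $S$, so the body stays monotone in the atoms over which the infimum ranges. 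The paper's route is shorter, and its transformation $(\Pi',\D')$ is reused immediately afterwards as the \emph{definition} of the chase semantics for semipositive programs, so it plugs directly into \Cref{semiposstrat} and the complexity bound. Your route has the advantage that the constructed interpretation is a model of $\Pi$ itself under the declared semantics $\I(U\,A)=U(\I(A))$, and it hews closer to the cited Theorem 15.2.2 of Abiteboul et al., where the minimum is likewise taken over models whose restriction to the extensional schema \emph{equals} the input.

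Two caveats. First, your restriction of universality to $S$ is not cosmetic but essential, and should be flagged as a reinterpretation of the statement: under the paper's verbatim definition of fuzzy universal model (quantifying over \emph{all} fuzzy models of $\Pi$ and $\D$) the proposition is false for non-monotone $U$. For $\neg R(x) \rightarrow P(x)$ with $\D(R(a))=0.3$, the model with $R(a)\mapsto 1$, $P(a)\mapsto 0$ admits no non-decreasing homomorphism from any model that keeps $R(a)$ at $0.3$ and hence must give $P(a)$ degree at least $0.7$; so no interpretation embeds into all models. The paper's proof makes the same move implicitly (by fiat of the rewriting), so this is not a defect relative to the paper, but you should state it. Second, the two constructions do not produce the same object: on atoms absent from $\D$, your closed-world reading evaluates $U\,R(\mathbf{c})$ to $U(0)$ (so with $\neg$, infinitely many intensional atoms receive degree $1$, beyond the reach of any finite chase), whereas the paper's $\D'$ leaves $R^U(\mathbf{c})$ at degree $0$. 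Since the subsequent lemmas compute with the paper's transformation, your universal model is not the interpretation used downstream, even though both witness existence and uniqueness under their respective readings.
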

\begin{proof}[Proof sketch]
    For each unary operator $U$ and $R \in \edb(\Pi)$, let $R^U$ be a new relation with $\D'(R^U(\mathbf{c})) = U(\D(R(\mathbf{c})))$ for every ground atom where $\D(R(\mathbf{c}))$ is defined. Let $\D'(G)=\D(G)$ for all other $G \in \D$. Let $\Pi'$ be the \tdat program obtained by replacing every instance of $U\,R$ in the body of a rule with $R^U$. 
    The unique fuzzy universal model $\I$ of $\Pi'$ and $\D'$ (recall \Cref{minimalmodel}) is the desired unique fuzzy universal model of $\Pi$ and $\D$.
\end{proof}
Accordingly, we use the construction of $\Pi'$ and $\D'$ in the proof sketch of \Cref{semipos} also to define the semantics of a chase on semipositive programs: the result of applying the chase to semipositive program $\Pi$ and dataset $\D$ is the result of applying the chase to $\Pi'$ and $\D'$.
We write $\Pi(\D)$ for the unique fuzzy universal model of $\Pi$ and $\D$ from \Cref{semipos}.

A \emph{stratification} of a \tdatu program $\Pi$ is a sequence of \tdatu programs $\Pi_1,\dots,\Pi_n$ such that for some mapping $\zeta : \idb(\Pi) \to [n]$,
\begin{enumerate}[noitemsep,topsep=3pt,label=(\roman*)]
    \item $\{\Pi_1,\dots,\Pi_\ell\}$ is a partition of $\Pi$.
    \item for each relation symbol $R \in \idb(\Pi)$, all rules with $R$ in head are in $\Pi_{\zeta(R)}$.
    \item If there is a rule with $R$ in its head and $R' \in \idb(\Pi)$ in its body, then $\zeta(R') \leq \zeta(R)$. \label{strat3}
    \item If there is a rule with $R$ in its head and $U\, R'$ in its body, for some unary operator $U$, then $\zeta(R') < \zeta(R)$. \label{strat4}
\end{enumerate}
A program is \emph{stratifiable} if it has a stratification.

The semantics of a \tdatu program $\Pi$ is defined in terms of a procedure. Let $\zeta = P_1,\dots,P_n$ be a stratification of $\Pi$. Note that every $\Pi_i$ is semipositive. Define
\begin{align}
    & \D_0 = \D \\
    & \D_i = \mathsf{tgc}(\Pi_i,\D_{i-1}) \label{stratdef}
\end{align}
We denote $\D_n$ obtained from this procedure as $\zeta(\Pi,\D)$, and refer to it the semantics of a \tdatu program under $\zeta$.
We say that two stratifications of a \tdatu program are \emph{equivalent} if they have the same semantics for all fuzzy datasets.

For Datalog $^\neg$ it is well known that all stratifications lead to equivalent semantics~\cite{DBLP:books/aw/AbiteboulHV95}. The semantics of the formalism only affect a single key lemma, that we reprove below for our setting.
\begin{lemma}
    \label{semiposstrat}
    Let $\Pi$ be a semipositive \tdatu program and $\zeta$ a stratification for $\Pi$. Then $\Pi(\D)=\zeta(\D)$ for every fuzzy dataset $\D$.
\end{lemma}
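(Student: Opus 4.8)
The plan is to establish the two pointwise inequalities $\Pi(\D)\le\zeta(\D)$ and $\zeta(\D)\le\Pi(\D)$, relying on the characterization --- available from the proofs of \Cref{minimalmodel} and \Cref{semipos} --- that for a semipositive program the unique fuzzy universal model coincides with the pointwise-\emph{minimal} fuzzy model, so that $\Pi(\D)(A)=\inf\{\I'(A)\mid \I' \text{ a fuzzy model of }\Pi,\D\}$ and, likewise, each $\D_i=\mathsf{tgc}(\Pi_i,\D_{i-1})$ is the pointwise-minimal fuzzy model of $\Pi_i$ and $\D_{i-1}$. Throughout I write $\D_0,\dots,\D_n$ for the sequence of \eqref{stratdef}, so that $\zeta(\D)=\D_n$, and I let $\zeta$ also denote the underlying map $\idb(\Pi)\to[n]$.

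The single structural ingredient I will need is a preservation property: for every $R'\in\idb(\Pi)$ and every $j\ge\zeta(R')$ one has $\D_j(R'(\mathbf c))=\D_{\zeta(R')}(R'(\mathbf c))$. This holds because the chase $\mathsf{tgc}(\Pi_j,\D_{j-1})$ only ever updates atoms whose predicate heads some rule of $\Pi_j$, and by clause (ii) of the stratification all rules with $R'$ in the head lie in the single stratum $\Pi_{\zeta(R')}$; hence strata $j\ne\zeta(R')$ never touch $R'$-atoms. Extensional predicates are never rule heads, so their degrees stay equal to those in $\D$ throughout, and by semipositivity every operator-prefixed body atom is extensional, so its contribution $U(\D(\cdot))$ is constant across strata as well.

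For $\Pi(\D)\le\zeta(\D)$ I will show that $\D_n$ is a fuzzy model of $\Pi$ and $\D$, whereupon minimality of $\Pi(\D)$ yields the inequality. It models $\D$ because the chase only increases truth degrees, so $\D_n\ge\D_0=\D$ pointwise. For an arbitrary rule $r\in\Pi$, sitting in the stratum $\Pi_i$ determined by its head predicate ($i=\zeta$ of that predicate), clause (iii) forces every intensional body atom to have stratum $\le i$; combined with the preservation property and the constancy of extensional (and operator-prefixed) atoms, every body atom and the head atom of any grounding $\rho$ have the same degree in $\D_n$ as in $\D_i$. Since $\D_i$ is a model of $\Pi_i$ it satisfies $r$, i.e.\ $\D_i(\rho(\head(r)))\ge\D_i(\rho(\body(r)))$, and transporting both sides to $\D_n$ shows that $r$ is satisfied by $\zeta(\D)$.

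For $\zeta(\D)\le\Pi(\D)$ I will prove $\D_i\le\Pi(\D)$ by induction on $i$: the base case holds as $\Pi(\D)$ models $\D=\D_0$, and in the step the hypothesis $\D_{i-1}\le\Pi(\D)$ makes $\Pi(\D)$ a fuzzy model of the dataset $\D_{i-1}$ and, being a model of $\Pi\supseteq\Pi_i$, a model of $\Pi_i$ and $\D_{i-1}$; minimality of $\D_i$ among such models then gives $\D_i\le\Pi(\D)$, and taking $i=n$ finishes this direction. Combining the two inequalities gives $\Pi(\D)=\zeta(\D)$. I expect the preservation property to be the only delicate point --- in particular, verifying that higher strata cannot overwrite degrees already computed for lower-stratum intensional predicates, which is exactly where stratification clauses (ii) and (iii), together with semipositivity (rendering clause (iv) vacuous here), are used; everything else reduces to the minimality characterization already in hand from \Cref{minimalmodel} and \Cref{semipos}.
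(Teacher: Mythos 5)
Your proof is correct in substance, but it follows a genuinely different route from the paper's. The paper argues operationally: it takes the truth-greedy chase sequences $s_1,\dots,s_n$ of the individual strata and shows that their concatenation $s^* = s_1 s_2 \cdots s_n$ is itself a finite chase of the rewritten program $\Pi'$ and $\D'$; the only thing to check is that no trigger remains active at the end of $s^*$, which follows by induction because heads of $\Pi_i$ cannot occur in bodies of $\Pi_j$ for $j<i$. Then \Cref{universalmodel} and \Cref{minimalmodel} immediately give that $s^*$ produces the unique fuzzy universal model, i.e., $\zeta(\D)=\Pi(\D)$. You instead argue model-theoretically, sandwiching the two interpretations between pointwise inequalities via the inf-of-models characterization, with your preservation property (strata never overwrite each other's predicates) playing the role that the no-reactivation/fairness argument plays in the paper. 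Your route needs more bookkeeping (preservation, plus modularity of models with respect to sub-programs and intermediate datasets) but avoids reasoning about chase sequences entirely and makes the confluence of stratified evaluation explicit; the paper's route is shorter given that \Cref{universalmodel} is already in hand.

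One point you must make explicit for your minimality steps to be sound: the characterization $\Pi(\D)(A)=\inf\{\I'(A)\mid \I'\text{ a fuzzy model}\}$ is valid only when ``model'' refers to the rewritten \tda program $\Pi'$ and dataset $\D'$ from the proof of \Cref{semipos}, where each $U\,R$ is replaced by a fresh extensional predicate $R^U$ whose degree is frozen at $U(\D(R(\mathbf{c})))$ --- not to models of $\Pi$ under the direct operator semantics $\I(U\,A)=U(\I(A))$. For non-monotone operators the two notions genuinely diverge: for the rule $\neg R(x)\to S(x)$ with $\D(R(a))=0.5$, the interpretation with $R(a)\mapsto 1$ and $S(a)\mapsto 0$ is a model under the operator semantics, yet $\Pi(\D)(S(a))=0.5$, so $\Pi(\D)$ is not pointwise below all such models and their pointwise infimum is not a model at all. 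Your argument survives because every interpretation you actually use ($\D_n$, each $\D_i$, and $\Pi(\D)$ viewed as a model of $\Pi_i$ and $\D_{i-1}$) keeps its extensional degrees equal to those of $\D$ and hence extends canonically to a model of the rewritten program; but both directions of your sandwich quietly rely on this, and with arbitrary unary operators that reliance needs to be stated.
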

\begin{proof}%
    Recall that we the semantics of semipositive programs are the result of applying the truth-greedy chase on $\Pi'$ and $\D'$ as defined in the proof of \Cref{semipos}.
    Let $s_i$ be the truth-greedy chase sequence of (\ref{stratdef}). It is enough to observe that their concatenation $s^* = s_1 s_2 \cdots s_n$  is a finite chase of $\Pi'$ and $\D'$. By \Cref{universalmodel} and \Cref{minimalmodel} this will produce the unique fuzzy universal model from \Cref{semipos}.

     To see that the concatenation is indeed a finite chase of $\Pi'$ and $\D'$ we need to only check that there is no active trigger at the end of $s^*$. We argue via induction on the the concatenation of the first $i\leq n$ sequences $s^{(i)} = s_1\cdots s_i$ that in the interpretation obtained from applying $s^{(i)}$ to $\Pi'$ and $\D'$, no trigger for rules in $\Pi_1,\dots,\Pi_i$ is active. First, the trigger cannot be for a rule in $\Pi_i$ as it is the result of applying the chase on $\D_{i-1}$. Second, by inductive assumption no triggers from $\Pi_1,\dots,\Pi_{i-1}$ were active after $s^{(i-1)}$. But by \Cref{strat3} and \Cref{strat4}, the heads of $\Pi_i$ cannot occur in the bodies of $\Pi_j$ with $j < i$. Therefore, the addition of $s_i$ also cannot make any trigger for rules in $\Pi_1,\dots,\Pi_{i-1}$ active.
\end{proof}

\begin{theorem}
    Let $\Pi$ be a stratifiable \tdatu program. All stratifications of $\Pi$ are equivalent.
\end{theorem}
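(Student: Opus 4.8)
The plan is to follow the classical Datalog$^\neg$ argument, isolating the only place where the fuzzy semantics actually matters, namely \Cref{semiposstrat}, and then treating the rest as a purely combinatorial statement about evaluation orders. First I would make the dependency structure explicit: build a directed graph on $\idb(\Pi)$ with a \emph{positive} edge $R' \to R$ whenever some rule has $R$ in its head and $R'$ in a body atom without a unary operator, and a \emph{negative} edge whenever $R'$ occurs under a unary operator. Stratifiability is precisely the absence of a cycle through a negative edge, so the strongly connected components of this graph (the \emph{blocks}) contain no negative edge internally, depend only on $\Pi$ and not on any chosen stratification, and form a DAG in which every negative edge connects two distinct blocks in the forward direction. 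Every stratification $\zeta$ must assign a common level to all predicates of a block and must be consistent with this DAG.

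Second, I would show that every stratification is equivalent to a block-by-block evaluation along some linear extension of the block DAG. Each stratum $\Pi_i$ of $\zeta$ is a semipositive program over $\D_{i-1}$, and ordering its own blocks topologically (by positive edges) is itself a stratification of $\Pi_i$; by \Cref{semiposstrat} this refined evaluation yields exactly $\D_i = \mathsf{tgc}(\Pi_i,\D_{i-1})$. Concatenating these refinements across all strata gives $\zeta(\Pi,\D) = L_\zeta(\Pi,\D)$, where $L_\zeta$ processes the blocks one at a time along a linear extension of the DAG, each block being a semipositive program over all previously computed blocks together with the extensional predicates.

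Third, I would prove that any two such linear extensions give the same result. Using the standard fact that two linear extensions of a poset are connected by a finite sequence of transpositions of \emph{adjacent incomparable} elements, it suffices to treat a single swap of two consecutive blocks $B,B'$ with no edge between them. Incomparability means neither block's defining rules read the other's head predicates, so $B \cup B'$, evaluated over the already-fixed lower blocks and the extensional data, is itself semipositive; the two orders ``$B$ then $B'$'' and ``$B'$ then $B$'' are then two stratifications of this one semipositive program, hence give the same fuzzy dataset by \Cref{semiposstrat}, after which the subsequent computation is identical. Chaining these swaps yields $L_\zeta(\Pi,\D) = L_\xi(\Pi,\D)$ for any two stratifications $\zeta,\xi$, which is the claim.

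The step I expect to require the most care is the swap argument: I must verify that for incomparable $B,B'$ the merged program really is semipositive (all its negated atoms refer to strictly lower blocks, which act as extensional input here) and that swapping them leaves \emph{every} predicate value unchanged, not only those of $B$ and $B'$. Once this commutativity is secured, the fuzzy content of the theorem is entirely absorbed into \Cref{semiposstrat}, and the remaining reasoning is exactly the same combinatorics as in the Boolean case.
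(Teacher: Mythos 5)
Your proposal is correct and takes essentially the same approach as the paper: the paper's proof is a one-line reduction to the classical argument (Theorem~15.2.10 of Abiteboul--Hull--Vianu) with its key Lemma~15.2.9 replaced by \Cref{semiposstrat}, and what you have written out---blocks from the precedence graph, linear extensions of the block DAG, and adjacent swaps justified by \Cref{semiposstrat}---is precisely that classical combinatorial argument made explicit, with all the fuzzy content absorbed into \Cref{semiposstrat} exactly as the paper intends.
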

\begin{proof}[Proof sketch]
    The proof of Theorem~15.2.10 in~\cite{DBLP:books/aw/AbiteboulHV95} holds also for our setting by simply replacing the key Lemma~15.2.9 there with \Cref{semiposstrat}.
\end{proof}

That is, we can use any stratification to compute the semantics of \tdatu programs, just as for stratified Datalog $^\neg$. Computing a stratification $\zeta$ is possible in polynomial time in the size of a dataset (see also Proposition 15.2.7~\cite{DBLP:books/aw/AbiteboulHV95}) and thus computing the truth degree of  $G \in \gatom$ in $\zeta(\D)$ is also polynomial (cf. \Cref{tDatalog.ptime}).
Thus, entailment in \tdatu is in \ptime for data complexity, as stated in  \Cref{strat:complex}. \end{document}